\title{Best of Both Worlds in Online Control: \\ Competitive Ratio and Policy Regret}
\author[1]{Gautam Goel\thanks{ggoel@berkeley.edu}}
\author[2]{Naman Agarwal\thanks{namanagarwal@google.com}}
\author[3]{Karan Singh\thanks{karansingh@cmu.edu}}
\author[2,4]{Elad Hazan\thanks{ehazan@princeton.edu}}
\affil[1]{Simons Institute, UC Berkeley}
\affil[2]{Google AI Princeton}
\affil[3]{Tepper School of Business, Carnegie Mellon University}
\affil[4]{Department of Computer Science, Princeton University}
\date{}
 \DeclareMathOperator{\poly}{poly}
 \DeclareMathOperator{\polylog}{polylog}
\newcommand{\defeq}{\stackrel{\text{def}}{=}}
\newcommand{\K}{\ensuremath{\mathcal K}}
\newcommand{\ignore}[1]{}
\newcommand{\eh}[1]{\noindent{\textcolor{blue}{\{{\bf EH:} \em #1\}}}}
\theoremstyle{plain}
\newtheorem{theorem}{Theorem}
\newtheorem{lemma}[theorem]{Lemma}
\newtheorem{claim}[theorem]{Claim}
\newtheorem{assumption}{Assumption}
\newtheorem*{theorem*}{Theorem}
\newtheorem*{lemma*}{Lemma}
\newtheorem*{corollary*}{Corollary}
\newtheorem*{proposition*}{Proposition}
\newtheorem*{claim*}{Claim}
\newtheorem*{fact*}{Fact}
\newtheorem*{observation*}{Observation}
\newtheorem*{assumption*}{Assumption}
\theoremstyle{definition}
\newtheorem{definition}[theorem]{Definition}
\newtheorem*{definition*}{Definition}
\newtheorem*{remark*}{Remark}
\newtheorem*{example*}{Example}
 \theoremstyle{plain}
\newtheorem*{theoremaux}{\theoremauxref}
\gdef\theoremauxref{1}
\DeclareMathAlphabet{\mathbfsf}{\encodingdefault}{\sfdefault}{bx}{n}
\DeclareMathOperator*{\argmin}{arg\,min}
\newcommand{\reals}{\mathbb{R}}
\renewcommand{\leq}{~\le~}
\renewcommand{\geq}{~\ge~}
\let\oldtfrac\tfrac
\renewcommand{\tfrac}[2]{\smash{\oldtfrac{#1}{#2}}}
\let\nablaold\nabla
\renewcommand{\nabla}{\nablaold\mkern-2.5mu}
\begin{document}

\maketitle

\begin{abstract}
We consider the fundamental problem of online control of a linear dynamical system from two different viewpoints: regret minimization and competitive analysis. 
We prove that the optimal competitive policy is well-approximated by a convex parameterized policy class, known as a disturbance-action control (DAC) policies. Using this structural result, we show that several recently proposed online control algorithms achieve the best of both worlds: sublinear regret vs. the best DAC policy selected in hindsight, and optimal competitive ratio, up to an additive correction which grows sublinearly in the time horizon. We further conclude that sublinear regret vs. the optimal competitive policy is attainable when the linear dynamical system is unknown, and even when a stabilizing controller for the dynamics is not available \textit{a priori}. 
\end{abstract}

\ignore{
\section*{Outline of what we want to achieve}

\begin{enumerate}
    \item show that the optimal competitive policy can be approximated (up to additive regret terms) by a DAC on the original noise terms in the original dimensionality. 
    
    \item Research the above if true for $\ell_2$ vs. $\ell_\infty$ bounded noises. 
    
    \item Conclude via GPC a regret bound vs. the optimal competitive policy. (advantage of this algorithm - efficient gradient based)
    
    \item (conjectured/TBD for now) see if the regret bounds for unknown systems imply the same vs. the optimal competitive policies. i.e. adding adversarial sys id with method of moments.

Methodology to prove this:
\begin{enumerate}
    \item First we show that regret bound for LTI control --> regret vs. optimal competitive policy (independent of knowing the system) - existential
    
    \item Exists known efficient regret min vs. unknown system --> w. first point, what we want
\end{enumerate}

        \item Is it possible to get simultaneous bounded competitive ratio and sublinear regret? (lower bound?!)
\end{enumerate}
}

\section{Introduction}

The study of online optimization consists of two main research directions. The first is online learning, which studies regret minimization in games. A notable framework within this line of work is online convex optimization, where an online decision maker iteratively chooses a point in a convex set and receives loss according to an adversarially chosen loss function. The metric of performance studied in this research thrust is regret, or the difference between overall loss and that of the best decision in hindsight. 

The second direction is that of competitive analysis in metrical task systems. In this framework, the problem setting is similar, but the performance metric is very different. Instead of regret, the objective is to minimize the competitive ratio, i.e. the ratio of the reward of the online decision maker to that associated with the optimal sequence of decisions made in hindsight. For this ratio to remain bounded, an additional penalty is imposed on movement costs, or changes in the decision. 

While the goals of the two research directions are similar, the performance metrics are very different and lead to different algorithms and methodologies. These two separate methodologies have recently been applied to the challenging setting of online control, yielding novel and exciting methods to the field of control of dynamical systems. 

In the paper we unify these two disparate lines of work by establishing a connection between the two objectives. Namely, we show that the Gradient Perturbation Controller (GPC)  minimizes regret against the policy that has the {\it optimal competitive ratio} for any Linear Time Invariant (LTI) dynamical system.
The GPC algorithm hence gives the  best of both worlds: sublinear regret, and optimal competitive ratio, in a single efficient algorithm. 

Our main technical contribution is proving that the optimal competitive policy derived in \cite{goel2021competitive} is well-approximated by a certain convex policy class, for which efficient online learning was recently established in the work of \cite{agarwal2019online}. This implies that known regret minimization algorithms for online control can compete with this optimal competitive policy with vanishing regret. 

This structural result has other important implications to online control, yielding new results:  we show that sublinear regret can be attained vs. the optimal competitive policy even when the underlying dynamical system is unknown, and even when a stabilizing controller is not available.

% \subsection{Paper outline}

% After surveying the related work, in the next section we give the main setting  definitions and preliminaries needed to describe our results. In section \ref{sec:statements} we formally state the main results. In secton \ref{sec:structural} we give the main structural result, from which the main results are derived in section ...

\ignore{
\subsection{Statement of main result}

The setting we consider is the fundamental problem of controlling a linear time-invariant (LTI) linear dynamical system (LDS). This basic problem is one of the most well-studied settings in control and reinforcement learning, and is a building block for more advanced techniques that are used in more sophisticated settings. A LDS is governed by the following dynamical equations
$$ x_{t+1} = A x_t + B u_t + w_t . $$ 
Here, $x_t$ is the state of the system at time $t$, $u_t$ is the control or input to the system, and $w_t$ is the perturbation or noise of the dynamics. 

A hallmark of control theory is the linear quadratic regulator (LQR): an efficient method for attaining optimal cost when the perturbations are stochastic and independently drawn, the system is known, and the costs are known convex quadratic of the form
$$ c(x_t,u_t) = x_t^\top Q x_t + u_t^\top R u_t \ , \ Q,R \succeq 0 . $$

The setting of online and nonstochastic control considers a generalization of the above in various dimensions. Notably, the costs are not known ahead of time, the perturbations can be adversarial, and the dynamics need not be known ahead of time. In this challenging domain, several breakthroughs in provable methods have recently appeared, and are surveyed below. 
Our contribution is to show that a gradient-based method, the Gradient Perturbation Controller (GPC), attains sublinear regret vs. the optimal competitive policy. Formally, ... \eh{complete}
}

\subsection{Related work}

{\bf Control of dynamical systems.} Our study focuses on two types of algorithms for online control. The first class of algorithms  enjoy sublinear regret for online control of dynamical systems; that is, whose performance tracks a given benchmark of policies up to a term which is vanishing relative to the problem horizon. \cite{abbasi2011regret} initiated the study of online control under the regret benchmark for linear time-invariant (LTI) dynamical systems. Bounds for this setting have since been improved and refined in \cite{dean2018regret,mania2019certainty,cohen2019learning,simchowitz2020naive}.
We are interested in adversarial noise and perturbations, and regret in the context of online control was initiated in the study of  \emph{nonstochastic} control setting \cite{agarwal2019online}, that allows for adversarially chosen (e.g. non-Gaussian) noise and general convex costs that may vary with time. This model has been studied for many extended settings, see \cite{hazan2021tutorial} for a comprehensive survey. 

{\bf Competitive control.} \cite{goel2019online} initiated the study of online control with competitive ratio guarantees and showed that the Online Balanced Descent algorithm introduced in \cite{chen2018smoothed} has bounded competitive ratio in a narrow class of linear systems. This approach to competitive control was extended in a series of papers \cite{goel2019beyond, shi2020online}.  In recent work, \cite{goel2021competitive} obtained an algorithm with optimal competitive ratio in general linear systems using $H_{\infty}$ techniques; in this paper we show that the competitive control algorithm obtained in \cite{goel2021competitive} is closely approximated by the class of DAC policies, and use this connection to obtain our ``best-of-both-worlds" result.

{\bf Online learning and online convex optimization.} The regret minimization techniques that are the subject of this paper are based in the framework of online convex optimization, see \cite{hazan2019introduction}. Recent techniques in online nonstochastic control are based on extensions of OCO to the setting of loss functions with memory \cite{anava2015online} and adaptive or dynamic regret \cite{hazan2009efficient,zhang2018dynamic}.

{\bf Competitive analysis of online algorithms and simultaneous bounds on competitive ratio and regret.} Competitive analysis was introduced in \cite{sleator1985amortized} and was first studied in the context of Metrical Task Systems (MTS) in \cite{borodin1992optimal}; we refer to \cite{borodin2005online} for an overview of competitive analysis and online algorithms.  A series of recent papers consider the problem of obtaining online algorithms with bounded competitive ratio and sublinear regret. In \cite{andrew2013tale}, it was shown no algorithm can simultaneously achieve both objectives in OCO with switching costs.  On the other hand, \cite{daniely2019competitive} described an online algorithm for MTS with optimal competitive ratio and sublinear regret on every time interval.

\section{Preliminaries}
We consider the task of online control in linear time-invariant (LTI) dynamical systems. In this setting, the interaction between the learner and the environment proceeds as described next. At each time step, the learner incrementally observes the current state $x_t \in \reals^m$ of the system, subsequently chooses a control input $u_t\in \reals^n$, and consequently is subject to an instantaneous cost $c(x_t,u_t)$ defined via the quadratic cost function (we assume the existence of $\beta, \mu$ such that $\beta I\succeq Q,R\succeq \mu I$)
$$c(x,u) = x^\top Q x + u^\top Ru.$$ 
As a consequence of executing the control input $u_t$, the dynamical system evolves to a subsequent state $x_{t+1}$, as dictated by the following linear system parameterized by the matrices $A\in \reals^{m\times m}$ and $B\in \reals^{m\times n}$, bounded as $\|A\|,\|B\|\leq \kappa$, and the perturbation sequence $(w_t)_{t\in [T]}$.
$$ x_{t+1} = Ax_t + B u_t + w_t. $$
We assume without loss of generality that $x_1 = 0$.
The learner does not directly observe the perturbations, or know of them in advance. We do not make any (e.g., distributional) assumptions on the perturbations, other than that they satisfy a point-wise bound $\|w_t\|\leq W$ for all times steps $t$.
By the means of such interaction across $T$ time steps, we ascribe an aggregate cost to the learner $\mathcal{A}$ as
$$ J_T(\mathcal{A}|w_{1:T}) = \sum_{t=1}^T c(x_t,u_t). $$

% karan: some mismatch here between x_0 and c(x_1, u_1), but it's fine. FIXED

\subsection{Policy classes}
Since the learner selects the control inputs adaptively upon observing the state, the behavior of a learner may be described by a (strictly causal) policy $\pi$, a mapping from the observed state sequence to the immediate action.
We consider the following policy classes in the paper:
\begin{enumerate}
	\item $\Pi_{\textsc{all}}$ is the exhaustive set of $T$-length sequence of control inputs.
	\item $\Pi_{\textsc{SC}}$ is the class of all strictly causal policies, mapping the {\em heretofore} observed state sequence to the next action.
	\item $\mathcal{K} \subset \reals^{n\times m}$ is a class of linear state-feedback policies. Each member of this class is parameterized by some matrix $K \in \mathcal{K}$, and recommends the immediate action $u_t \defeq Kx_t$. Both the stochastic-optimal policy ($\mathcal{H}_2$-control) -- Bayes-optimal for i.i.d. perturbations -- and the robust policy ($\mathcal{H}_\infty$-control) -- minimax-optimal for arbitrary perturbations -- are linear state-feedback policies.
	\item $\mathcal{M}$ is a class of disturbance-action controllers (DAC), defined below, that recommend actions as a linear transformation of the past few perturbations, rather than the present state.
\end{enumerate}

A linear policy $K$ is called stable if $|\lambda_{\texttt{max}}(A+BK)|<1$. Such policies ensure that the state sequence remains bounded under their execution. The notion of strong stability, introduced by \cite{cohen2018online}, is a non-asymptotic characterization of the notion of stability defined as follows.
\begin{definition}
A linear policy $K\in \reals^{n\times m}$ is said to be $(\kappa,\gamma)$-strongly stable with respect to an LTI $(A,B)$ if there exists exists matrices $S,L$ satisfying $A+BK=SLS^{-1}$ such that 
$$\max\{1,\|K\|, \|S\|\|S^{-1}\|\}\leq \kappa
    \text{ and } \max\{1/2,\|L\|\}\leq 1-\gamma. $$
\end{definition}

A sufficient condition for the existence of a strongly stable policy is the strong controllability of the linear system $(A,B)$, a notion introduced in \cite{cohen2018online}. In words, strong controllability measures the minimum length and magnitude of control input needed to drive the system to any unit-sized state.

% \begin{definition}
% 	A linear system $(A,B)$ is said to be $(k,\kappa)$-strongly controllable if the matrix $C_k=[B, AB, \dots A^{k-1}B]$ has full row-rank and $\|(C_k C_k^\top)^{-1}\|\leq \kappa$.
% \end{definition}

Let $\mathbb{K}$ be a fixed $(\kappa,\gamma)$-strongly stable linear policy for the discussion that follows. We will specify a particular choice for $\mathbb{K}$ in Section~\ref{sec:statements}. We formally define a disturbance action controller below. The purpose of superimposing a stable linear policy $\mathbb{K}$ on top of the linear-in-perturbation terms is to ensure that the state sequence produced under the execution of a (possibly non-stationary) disturbance-action controller remains bounded.

\begin{definition}
A disturbance-action controller (DAC), specified by a horizon $H$ and  parameters $M=\left(M^{[0]},\dots M^{[H-1]}\right) \in \reals^{n\times m}$, chooses the action at the time $t$ as 
$$ u_t(M) \defeq \mathbb{K}x_t + \sum_{t=1}^H M^{[i-1]}w_{t-i}, $$
where $x_t$ is state at time $t$, and $w_1,\dots w_{t-1}$ are past perturbations.
\end{definition}

\begin{definition}
For any $H\in\mathbb{N}, \gamma<1,\theta \geq 1$, an $(H,\theta,\gamma)$-DAC policy class is the set of all $H$-horizon DAC policies where $M=\left(M^{[0]},\dots M^{[H-1]}\right)$ satisfy $ \forall\;i \|M^{[i]}\|\leq \theta(1-\gamma)^i$.
\end{definition}

\subsection{Performance measures}
This paper considers multiple criteria that may be used to assess the learner's performance. We introduce these below. 

Let $w_{1:T}$ be the perturbation sequence the dynamics are subject to. Given the foreknowledge of this sequence, we define the following notions of optimal cost; note that these notions are {\em infeasible} in the sense that no online learner can match these on all instances. 
\begin{enumerate}
	\item $OPT_*(w_{1:T}) \defeq \min_{u_{1:T}\in \Pi_{\textsc{All}}} J_T(u_{1:T}|w_{1:T})$ is the cost associated with the best sequence of control inputs given the perturbation sequence. No policy, causal or otherwise, can attain a cost smaller than $OPT_*(w)$ on the the perturbation sequence $w_{1:T}$.
	\item For any policy class $\Pi$, $OPT_\Pi(w_{1:T}) \defeq \min_{\pi\in \Pi} J_T(\pi|w_{1:T})$ is the cost of the best policy in $\Pi$, subject to the perturbation sequence. Note that $OPT_*(w_{1:T}) = OPT_{\Pi_{\textsc{All}}}(w_{1:T})$.
\end{enumerate}

With respect to these baselines, we define the following performance measures.

{\bf Competitive Ratio (CR):} The competitive ratio of an (online) learner $\mathcal{A}$ is the worst-case ratio of its cost to the optimal offline cost $OPT_*(w_{1:T})$ over all possible perturbation sequences.
$$ \alpha_T(\mathcal{A}) \defeq \max_{w_{1:T}} \frac{J_T(\mathcal{A}|w_{1:T})}{OPT_*(w_{1:T})} $$ 
The optimal competitive ratio is the competitive ratio of the best strictly causal controller.
$$ \alpha^*_T \defeq \min_{\mathcal{A}\in \Pi_{\textsc{SC}}} \alpha_T(\mathcal{A}) $$ 
The infinite-horizon optimal competitive ratio $\alpha^*=\lim_{T\to\infty} \alpha^*_T$ is defined as the limiting optimal competitive ratio as the horizon extends to infinity, whenever it exists.

{\bf Regret:} On any perturbation sequence $w_{1:T}$, given a policy class $\Pi$, the regret of an online learner $\mathcal{A}$ is assigned to be the excess aggregate cost incurred in comparison to that of the best policy in $\Pi$, 
$$ R_{T,\Pi}(\mathcal{A}|w_{1:T}) = J_T(\mathcal{A}|w_{1:T}) - OPT_\Pi(w_{1:T}).$$
The worst-case regret is defined as the maximum regret attainable over all perturbation sequences,
$$ R_{T,\Pi}(\mathcal{A}) = \max_{w_{1:T}} R_{T,\Pi}(\mathcal{A}|w_{1:T}).$$
The two types of performance guarantees introduced above are qualitatively different in terms of the bound they espouse and the baseline they compare to. In particular: 

{\bf Tighter bound for low regret:} A sub-linear regret guarantee implies that the average costs of the learner and the baseline asymptotically match, while even an optimal competitive-ratio bound promises an average cost at most a constant factor {\em times} that of the baseline.

{\bf Stronger baseline for CR:} Competitive ratio bounds holds against the optimal unrestricted policy while regret holds against the best fixed policy from a (typically parametric) policy class.

\subsection{Characterization of the optimal CR algorithm}
The following explicit characterization of a strictly causal policy that achieves an optimal competitive ratio in the infinite-horizon setting was recently obtained in \cite{goel2021competitive}; this theorem shows that the competitive policy in the original system with state $x \in \mathbb{R}^m$ can be viewed as a state-feedback controller in a synthetic system with state $\xi \in \mathbb{R}^{2m}$.

\begin{theorem}[Optimal Competitive Policy]\label{crt}
The strictly causal controller with an optimal infinite-horizon competitive ratio $\alpha^*$ is given by the policy $u_t = \widehat{K} \xi_t$, where $\widehat{K} \in \reals^{n \times 2m}$ and the synthetic state $\xi \in \reals^{2m}$ evolves according to the dynamics 
$$\xi_{t+1} = \widehat{A} \xi_t + \widehat{B}_u u_t + \widehat{B}_w \hat{w}_{t+1}, $$
 $$\text{where }\widehat{A} = \begin{bmatrix} A & K\Sigma^{1/2} \\ 0 & 0 \end{bmatrix}, \quad \widehat{B}_u = \begin{bmatrix} B \\ 0 \end{bmatrix}, \quad \widehat{B}_w = \begin{bmatrix} 0 \\ I \end{bmatrix}, \quad \widehat{w}_t = \Sigma^{-1/2}Q^{1/2}\nu_t. $$
The sequence $\nu_t$ is recursively defined as $ \nu_{t+1} = (A-KQ^{1/2})\nu_t + w_t$ starting with $\nu_1=0$. Here the matrices $\widehat{K}, K, \Sigma$ (and auxiliary constants $P,\widetilde{B}, \widetilde{H}$ and $\widehat{P}$) satisfy
$$ K = APQ^{1/2}\Sigma^{-1}, \quad \Sigma = I + Q^{1/2} P Q^{1/2}, \quad P = BB^\top + APA^\top - K\Sigma K^\top, $$
$$ \widehat{K} = -(I_n+\widehat{B}_u^\top \widetilde{P}\widehat{B}_u)^{-1} \widehat{B}_u^\top \widetilde{P} \widehat{A},  \quad \widetilde{B}= \begin{bmatrix}\widehat{B}_u &  \widehat{B}_w\end{bmatrix}, \quad \widetilde{H} = \begin{bmatrix}
	I & 0 \\
	0& -\alpha^* I
\end{bmatrix} + \widetilde{B}^\top \widehat{P} \widetilde{B},$$
$$\widetilde{P} = \widehat{P} - \widehat{P}\widehat{B}_w(-\alpha^* I_p + \widehat{B}_w^\top\widehat{P}\widehat{B}_w)^{-1}\widehat{B}_w^\top\widehat{P}, $$
$$ \text{and } \quad \widehat{P} = \begin{bmatrix}
	Q & Q^{1/2}\Sigma^{1/2}, \\
	Q^{1/2}\Sigma^{1/2} & \Sigma
\end{bmatrix} + \widehat{A}^\top \widehat{P}\widehat{A} - \widehat{A}^\top \widehat{P} \widetilde{B} \widetilde{H}^{-1} \widetilde{B}^\top \widehat{P}\widehat{A} .$$
Furthermore, let $\{x_t\}_{t=1}^T$ be the state sequence produced under the execution of such a policy. Then, the state sequence satisfies at all time $t$ that
$ \xi_t = \begin{bmatrix} x_t-\nu_t \\
\widehat{w}_t\end{bmatrix}.$
\end{theorem}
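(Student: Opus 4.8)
The plan is to prove \cref{crt} by reducing the competitive-ratio problem to a state-feedback $H_\infty$ (disturbance-attenuation) problem in a lifted coordinate system, and then invoking the classical game-theoretic solution of the latter. Throughout I take $R=I$ without loss of generality (rescale the controls by $R^{1/2}$). The first move is to observe that, by definition, $\alpha^*$ is the infimum over strictly causal $\mathcal{A}$ of the smallest $\alpha$ such that $J_T(\mathcal{A}|w)\le \alpha\, OPT_*(w)$ for every disturbance sequence $w$; this is exactly an $\ell_2$-gain bound from the disturbance (as metered through $OPT_*$) to the learner's cost, so the whole reduction hinges on a workable closed form for $OPT_*(w)$.

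To get that, I would solve the offline problem $\min_u \sum_t (x_t^\top Q x_t + u_t^\top u_t)$ subject to $x_{t+1}=Ax_t+Bu_t+w_t$, $x_1=0$, by backward dynamic programming. The value function is quadratic in the state plus terms linear in the (known) future disturbances; its Hessian obeys a Riccati recursion whose stationary solution is the matrix $P$ in the form $P = BB^\top + APA^\top - K\Sigma K^\top$, with $\Sigma = I + Q^{1/2}PQ^{1/2}$ and optimal gain $K = APQ^{1/2}\Sigma^{-1}$. Performing the minimization and collecting terms expresses the offline optimum as a quadratic form in $w$; the essential trick is to introduce the auxiliary ``residual'' process $\nu_{t+1}=(A-KQ^{1/2})\nu_t+w_t$, $\nu_1=0$, which \emph{causally} diagonalizes this quadratic form, yielding $OPT_*(w)=\sum_t\|\widehat w_t\|^2$ with $\widehat w_t=\Sigma^{-1/2}Q^{1/2}\nu_t$ (modulo terminal corrections that are immaterial in the infinite-horizon limit). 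Because $Q\succ 0$, the map $w\mapsto\widehat w$ is a causal bijection, so $\widehat w$ may be treated as the fresh driving disturbance.

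Next I would change coordinates. Writing $\widetilde x_t := x_t-\nu_t$ and $\xi_t := (\widetilde x_t;\widehat w_t)$ and using the identity $Q^{1/2}\nu_t=\Sigma^{1/2}\widehat w_t$, a one-line computation gives $\widetilde x_{t+1}=A\widetilde x_t+Bu_t+K\Sigma^{1/2}\widehat w_t$, hence $\xi_{t+1}=\widehat A\xi_t+\widehat B_u u_t+\widehat B_w\widehat w_{t+1}$ with exactly the stated $\widehat A,\widehat B_u,\widehat B_w$, while expanding the running cost gives $x_t^\top Q x_t + u_t^\top u_t = \xi_t^\top \widehat Q\,\xi_t + u_t^\top u_t$ where $\widehat Q$ is the block matrix that appears as the constant term in the $\widehat P$ equation. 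Crucially, $\nu_t$ (hence $\widehat w_t$, hence $\xi_t$ given $x_t$) is reconstructible at time $t$ from the observed states and known $A,B$ via $w_{t-1}=x_t-Ax_{t-1}-Bu_{t-1}$, so strictly causal policies in the original system are in bijection with strictly causal policies in the synthetic system. The competitive-ratio problem has thus become, \emph{exactly}, the state-feedback $H_\infty$ problem on the synthetic system: minimize over causal $u$ the smallest $\alpha$ with $\sum_t(\xi_t^\top\widehat Q\xi_t+u_t^\top u_t)\le\alpha\sum_t\|\widehat w_t\|^2$ for all $\widehat w$.

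The last stage is to quote the classical discrete-time $H_\infty$ state-feedback theory (the game-theoretic formulation): for a fixed attenuation level $\alpha$, the associated zero-sum dynamic game has value zero precisely when the game algebraic Riccati equation --- which is exactly the fixed-point equation displayed for $\widehat P$, with $\widetilde B,\widetilde H,\widetilde P$ the auxiliary quantities built from it --- admits a stabilizing positive semidefinite solution, in which case the minimax controller is the state feedback $u_t=\widehat K\xi_t$ with $\widehat K=-(I_n+\widehat B_u^\top\widetilde P\widehat B_u)^{-1}\widehat B_u^\top\widetilde P\widehat A$. A monotonicity and continuity argument in $\alpha$ then identifies $\alpha^*$ as the infimal feasible level (and shows $\alpha^*=\lim_{T\to\infty}\alpha^*_T$ by comparing stationary and finite-horizon Riccati iterates). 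Finally, the state-sequence identity $\xi_t=(x_t-\nu_t;\widehat w_t)$ drops out by induction from the synthetic dynamics and the definitions. I expect the two genuinely hard steps to be (i) reshaping $OPT_*(w)$ into the clean form $\sum_t\|\widehat w_t\|^2$ --- the offline problem is non-causal, so finding the change of variables $\nu_t$ that restores causality is the crux --- and (ii) the $H_\infty$ existence analysis that pins down $\alpha^*$ through solvability of the game Riccati equation, which is where all of $\widetilde P,\widetilde H,\widehat P$ do their work; the coordinate change and the cost rewriting are then routine bookkeeping.
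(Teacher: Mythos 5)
The paper itself contains no proof of \cref{crt}: the theorem is imported verbatim from \cite{goel2021competitive}, so your attempt can only be judged against that source. Your outline --- rewriting $OPT_*(w)$ as $\sum_t\|\widehat w_t\|^2$ via the Riccati solution $P$ and the filtered process $\nu_t$, passing to the lifted state $\xi_t=\begin{bmatrix}x_t-\nu_t\\ \widehat w_t\end{bmatrix}$ with the stated $\widehat A,\widehat B_u,\widehat B_w$ and cost matrix, and then solving the resulting full-information $H_\infty$ problem at level $\alpha$ through the game Riccati equation for $\widehat P$ (with monotonicity in $\alpha$ pinning down $\alpha^*$) --- is essentially the same reduction used there, so the proposal is correct in approach and matches the intended argument.
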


Let $\widehat{K}_0\in \reals^{n\times m}$ be the sub-matrix induced by the first $m$ columns of $\widehat{K}$. In general, the infinite-horizon optimal competitive ratio may not be finite. However, the stability of the associated filtering operation (i.e. $|\lambda_{\max} (A-KQ^{1/2})|<1$) and the closed loop control system (i.e. $|\lambda_{\max}(A+B\widehat{K}_0)|<1$) is sufficient to ensure the existence of this limit. We utilize the following bounds that quantify this. 

\begin{assumption}
	$\widehat{K}_0$ is $(\kappa,\gamma)$-strongly stable with respect to the linear system $(A,B)$, and $-K^\top$ is $(\kappa,\gamma)$-strongly stable with respect to the linear system $(A^\top, Q^{1/2})$. Also, $\|\widehat{K}\| \leq \kappa$.
\end{assumption}

We note that the above bounds are quantifications, and not strengthening, of the stability criterion. In particular, any stable controller is strongly stable for some $\kappa\geq 1, \gamma<1$. Here, we use the same parameters to state the strong stability for both controllers, $K$ and $\widehat{K}$, for convenience. Such a simplification is valid, since given $(\kappa_1,\gamma_1)$- and $(\kappa_2,\gamma_2)$-strongly stable controllers, the said controllers are also $(\max\{\kappa_1,\kappa_2\},\max\{\gamma_1,\gamma_2\})$-strongly stable. 

\subsection{Low-regret algorithms}

Deviating from the methodologies of optimal and robust control,
\cite{agarwal2019online} propose considering an online control formulation in which the noise is adversarial, and thus the optimal controller is only defined in hindsight. This motivates different, online-learning based methods for the control task. \cite{agarwal2019online} proposed an algorithm called GPC (Gradient Perturbation Controller) and show the following theorem (which we restate in notation consistent with this paper), which for LTI systems shows that regret when compared against any strongly-stable policy scales at most as $O(\sqrt{T})$. 

\begin{theorem}
Given any $\kappa, \gamma$, let $\mathcal{K}(\kappa, \gamma)$ be the set of $(\kappa,\gamma)$ strongly stable linear policies. There exists an algorithm $\mathcal{A}$ such that the following holds,
\[ J_{T}(\mathcal{A}|w_{1:T}) - \min_{K \in \mathcal{K}(\kappa, \gamma)} J_T(K | w_{1:T}) \leq O(\sqrt{T}\log(T)).\]
Here $O(\cdot)$ contains polynomial factors depending on the system constants.  
\end{theorem}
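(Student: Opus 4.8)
The plan is to instantiate the Gradient Perturbation Controller (GPC) of \cite{agarwal2019online} and bound its regret via the now-standard reduction of online control to online convex optimization (OCO) with memory. Fix the memory length $H \defeq \Theta(\gamma^{-1}\log T)$, take the superimposed stabilizing policy to be $\mathbb{K}=0$ (legitimate here because every comparator $K\in\mathcal{K}(\kappa,\gamma)$ is itself strongly stable, so the induced states stay bounded without an extra stabilizer), and work inside the convex $(H,\theta,\gamma)$-DAC class $\mathcal{M}$ with $\theta=\poly(\kappa)$. The argument has three pieces: (i) $\mathcal{M}$ contains a near-optimal surrogate for any strongly stable $K$; (ii) the cost of running a (non-stationary) DAC sequence is well-approximated by a convex, well-conditioned ``with-memory'' loss; (iii) online gradient descent on these losses has $\tO(\sqrt T)$ policy regret.

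For (i): with $x_1=0$ and $u_t=Kx_t$ the closed loop is $x_{t+1}=(A+BK)x_t+w_t$, so $u_t=\sum_{i=1}^{t-1}K(A+BK)^{i-1}w_{t-i}$. Writing $A+BK=SLS^{-1}$ as in the strong-stability definition gives $\|K(A+BK)^{i-1}\|\le\kappa^2(1-\gamma)^{i-1}$, so $M^{[i-1]}\defeq K(A+BK)^{i-1}$ for $i\le H$ lies in $\mathcal{M}$; a standard induction shows the state induced by this truncated DAC stays within $\poly(\kappa,W,\gamma^{-1})(1-\gamma)^H$ of the comparator state, and likewise for the actions. Summing the resulting per-step change in the quadratic cost (states are $O(\kappa W/\gamma)$-bounded and $\beta I\succeq Q,R$) over $T$ steps, and using $(1-\gamma)^H\le 1/T$, yields $\mathrm{OPT}_{\mathcal{M}}(w_{1:T})\le\min_{K\in\mathcal{K}(\kappa,\gamma)}J_T(K|w_{1:T})+O(1)$.

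For (ii) and (iii): when a DAC sequence $M_1,\dots,M_T$ is executed, the true state $x_t$ equals an explicit linear function $\hat x_t$ of only $(M_{t-H},\dots,M_{t-1})$ and $w_{1:t-1}$ up to an error $\poly(\kappa,W,\gamma^{-1})(1-\gamma)^H$ (with $\mathbb{K}=0$ the surrogate is exactly a finite sum; the general case uses strong stability of $\mathbb{K}$). Define $f_t(M_{t-H},\dots,M_t)\defeq c(\hat x_t,\hat u_t)$: each $f_t$ is a convex quadratic in its $(H+1)$ matrix arguments, and from $\|M^{[i]}\|\le\theta(1-\gamma)^i$, $\|w_t\|\le W$, and $\beta I\succeq Q,R$ one bounds the domain diameter $D$ and the gradient magnitude $G$ of $f_t$ by $\poly(\kappa,\beta,W,H)$; the algorithm's own cost differs from $\sum_t f_t$ by $O(1)$ by the same truncation bound. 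Feeding the $f_t$ to online gradient descent on the instantaneous losses $\ell_t(M)\defeq f_t(M,\dots,M)$ with $\eta\asymp D/(G\sqrt T)$, the OCO-with-memory analysis of \cite{anava2015online} gives policy regret against any fixed $M\in\mathcal{M}$ at most (the OGD regret) $+\,O(H^2GD\sqrt T)=\poly(\kappa,\beta,W,H)\sqrt T$, where $H$ enters only the constant. Taking $M=\argmin_{\mathcal{M}}$ and chaining with (i) gives $J_T(\mathcal{A}|w_{1:T})-\min_{K\in\mathcal{K}(\kappa,\gamma)}J_T(K|w_{1:T})\le\poly(\text{system constants})\cdot\polylog(T)\cdot\sqrt T$; the sharper bookkeeping of \cite{agarwal2019online} reduces the $T$-dependence to the claimed $O(\sqrt T\log T)$.

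The main obstacle is piece (ii): certifying that the with-memory surrogate $f_t$ is simultaneously convex, a uniformly good approximation of the true cost along both the comparator's and the algorithm's trajectories, and well-conditioned in the sense that $D$ and $G$ grow only polynomially in $H$ (hence polylogarithmically in $T$). This requires carefully unrolling the closed-loop dynamics, isolating the geometrically decaying tail through strong stability of $\mathbb{K}$, and propagating the pointwise bound $\|w_t\|\le W$ through the quadratic costs; the OGD regret bound and the OCO-with-memory reduction are then routine.
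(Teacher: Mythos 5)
This theorem is not proved in the paper at all: it is restated verbatim from \cite{agarwal2019online}, and your proposal reconstructs essentially that paper's argument (approximate strongly stable linear policies inside a DAC class, reduce to OCO with memory, run OGD with the \cite{anava2015online} policy-regret bound). That template is the right one, and pieces (ii)--(iii) are sketched at the right level of detail.

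However, your specific choice $\mathbb{K}=0$ introduces a genuine gap. With $\mathbb{K}=0$ the DAC is purely open-loop, $u_t(M)=\sum_{i=1}^{H}M^{[i-1]}w_{t-i}$, so every trajectory produced while playing DACs evolves through powers of $A$ alone, not of a stabilized closed-loop matrix. Concretely, for piece (i), comparing the truncated DAC to the comparator $K$ gives the error recursion $\delta_{t+1}=A\,\delta_t+B\bigl(u_t(M)-Kx_t^K\bigr)$, where the driving term is indeed $O\!\left(\kappa^2 W(1-\gamma)^{H}/\gamma\right)$, but it is then amplified by $A^t$; if $\rho(A)>1$ this yields an error of order $\rho(A)^{T}(1-\gamma)^{H}$, which is exponentially large in $T$ for your choice $H=\Theta(\gamma^{-1}\log T)$, so the claimed ``standard induction'' bounding $\|x_t(M)-x_t^K\|$ by $\poly(\kappa,W,\gamma^{-1})(1-\gamma)^{H}$ fails. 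The same instability breaks piece (ii): with $\mathbb{K}=0$ the state is \emph{not} well-approximated by a function of only the last $H$ played matrices (the influence of old actions does not decay), and the states, costs, diameters and gradients are no longer $\poly(\kappa,\beta,W,H)$-bounded, so the OCO-with-memory reduction does not apply. Your justification---that each comparator $K$ is strongly stable---conflates the comparator's closed loop $A+BK$ with the algorithm's open loop $A$; strong stability of the comparators says nothing about $\rho(A)$. The fix is exactly what the paper's DAC definition and \cite{agarwal2019online} do: superimpose a fixed $(\kappa,\gamma)$-strongly stable $\mathbb{K}$, approximate the comparator via $M^{[i-1]}=(K-\mathbb{K})(A+BK)^{i-1}$, and let all error recursions be driven by $A+B\mathbb{K}$, whose powers decay as $\kappa(1-\gamma)^{i}$; with that substitution the rest of your outline goes through. (Your argument as written is only valid under the extra assumption that $A$ itself is stable, which the theorem does not make.)
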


As can be observed from the analysis presented by \cite{agarwal2019online}, the above regret guarantee holds not just against the set of strongly stable linear policies but also against the set of $(H, \theta, \gamma)$-DAC policies. The regret bound has been extended to different interaction models such as unknown systems \cite{pmlr-v117-hazan20a}, partial observation \cite{simchowitz2020improper} and adaptive regret \cite{gradu2020adaptive}. Furthermore the regret bound in this setting has been improved to logarithmic in $T$ \cite{pmlr-v119-foster20b, simchowitzmaking}. %We note that the results of this paper can also be extended to obtain logarithmic regret bounds, we choose the $\sqrt{T}$ regret bound algorithm for ease of presentation.   

\section{Statement of results}\label{sec:statements}
% In this section, we state the main results and implications of this work. Since the claims in the paper build on low-regret guarantees for previously proposed algorithms, we describe a prototypical algorithm in the appendix for completeness. 

\subsection{Main Result}
The central observation we make is that regret-minimizing algorithms subject to certain qualifications automatically achieve an optimal competitive ratio bound up to a vanishing average cost term. 

Typical regret-minimizing online control algorithms \cite{agarwal2019online,hazan2019nonstochastic} compete against the class of stable linear state-feedback policies. In general, neither the offline optimal policy (with cost $OPT_*$)  nor the optimal competitive-ratio policy can be approximated by a linear policy \cite{goel2020power}. However, the algorithm proposed in \cite{agarwal2019online} and follow-up works that build on it also compete with a more-expressive class, that of disturbance-action policies (DACs). In \cite{agarwal2019online}, this choice was made purely for computational reasons to circumvent the non-convexity of the cost associated with linear policies; in this work, however, we use the flexibility of DAC policies to approximate the optimal competitive policy.

More formally, we prove that we can find a DAC which generates a sequence of states and control actions which closely track the sequence of states and control actions generated by the optimal  competitive policy by taking the history $H$ of the DAC to be sufficiently large. This structural characterization of the competitive policy is sufficient to derive our best-of-both-worlds result, since a regret-minimizing learner competitive against an appropriately defined DAC class would also be competitive against the policy achieving an optimal competitive ratio, and hence achieve an optimal competitive ratio up to a residual regret term. %The exact constants are supplied in the last line of the proof.

\begin{theorem}[Optimal Competitive Policy is Approximately DAC]\label{theorem:main}
Fix a horizon $T$ and a disturbance bound $W$. For any $\varepsilon > 0$, set
$$H =  \log(1-\gamma/2)^{-1}\log\left(\frac{1088W^2\kappa^{11}\max(1, \beta^2)}{\gamma^4 \varepsilon}T \right), \hspace{1cm} \theta = 2\kappa^2\max(1, \beta^{1/2})$$
and define $\mathcal{M}$ be the set of $(H,\theta,\gamma)$-DAC policies with stabilizing component $\mathbb{K}=\widehat{K}_0$. Let $\mathcal{A}$ be the algorithm with the optimal competitive ratio $\alpha^*$. Then there exists a policy $\pi\in \mathcal{M}$ such that for any perturbations $w_{1:T}$ satisfying $\|w_t\| \leq W$, the cost incurred by $\pi$ satisfies
$$J_T(\pi|w_{1:T}) <  J_T(\mathcal{A}|w_{1:T}) + \varepsilon.$$
\end{theorem}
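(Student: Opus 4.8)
# Proof Proposal

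The plan is to show that the optimal competitive policy, which by Theorem~\ref{crt} is a linear state-feedback controller $u_t = \widehat{K}\xi_t$ on the synthetic state $\xi_t = [x_t - \nu_t;\ \widehat{w}_t]$, can be ``unrolled'' into an (approximate) disturbance-action controller on the original perturbations $w_{1:T}$. First I would write out the action of the competitive policy explicitly in terms of the past disturbances. Using $\xi_t = [x_t - \nu_t;\ \widehat{w}_t]$ and $\widehat{K} = [\widehat{K}_0 \mid \widehat{K}_1]$, we get $u_t = \widehat{K}_0 (x_t - \nu_t) + \widehat{K}_1 \widehat{w}_t$. Now $\nu_t$ and $\widehat{w}_t = \Sigma^{-1/2}Q^{1/2}\nu_t$ are both linear functions of $w_{1:t-1}$ via the stable recursion $\nu_{t+1} = (A - KQ^{1/2})\nu_t + w_t$; since $-K\tr$ is $(\kappa,\gamma)$-strongly stable w.r.t. $(A\tr, Q^{1/2})$ (Assumption 1), the contribution of $w_{t-i}$ to $\nu_t$ decays geometrically like $\kappa^2(1-\gamma)^{i}$. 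The term $\widehat{K}_0 x_t$ already has the right form (it is $\mathbb{K}x_t$ with $\mathbb{K} = \widehat{K}_0$, which matches the DAC definition's stabilizing component). So the competitive action equals $\mathbb{K}x_t$ plus a convergent linear combination $\sum_{i\geq 1} N^{[i-1]} w_{t-i}$ of past disturbances, where $N^{[i-1]}$ collects the $\widehat{K}_0\nu_t$ and $\widehat{K}_1\widehat{w}_t$ contributions and satisfies $\|N^{[i-1]}\| \le C\kappa^{c}(1-\gamma)^{i-1}$ for explicit constants.

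Second, I would define the candidate DAC $\pi \in \mathcal{M}$ by truncating this infinite sum at horizon $H$: set $M^{[i-1]} = N^{[i-1]}$ for $i \le H$. I need to check $\pi$ is admissible, i.e.\ $\|M^{[i]}\| \le \theta(1-\gamma)^i$ with the stated $\theta = 2\kappa^2\max(1,\beta^{1/2})$; this follows from the geometric bound on $N^{[i]}$ with room to spare (the $\widehat{w}_t = \Sigma^{-1/2}Q^{1/2}\nu_t$ term contributes the $\beta^{1/2}$ via $\|Q^{1/2}\| \le \beta^{1/2}$). The truncation error in the action at time $t$ is $\|\sum_{i > H} N^{[i-1]} w_{t-i}\| \le W \cdot C\kappa^c \sum_{i>H}(1-\gamma)^{i-1}$, which is $O((1-\gamma)^H W)$; with the chosen $H = \log(1-\gamma/2)^{-1}\log(\cdots T/\varepsilon)$ this is made smaller than any desired polynomial-in-$1/T$ quantity. (The $1-\gamma/2$ rather than $1-\gamma$ in $H$ leaves slack to absorb lower-order geometric factors.)

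Third, I would propagate this per-step action error into a bound on the difference of the resulting state trajectories, and then into the cost difference. The key stability fact is that both trajectories are generated under the stabilizing controller $\widehat{K}_0$, so a perturbation of size $\delta$ in the control at each step induces a trajectory perturbation bounded by $\kappa^2 \delta/\gamma$ (summing the geometric series from strong stability of $\widehat{K}_0$). Combined with boundedness of the competitive trajectory itself (which is $O(W\kappa^?/\gamma)$ by the same strong-stability argument, also bounding $\nu_t$ and hence $\widehat{w}_t$), and the fact that $c(x,u) = x\tr Q x + u\tr R u$ is locally Lipschitz on the relevant bounded region with constant $O(\beta \cdot \mathrm{radius})$, the total cost difference over $T$ steps is $O(T \cdot \beta \cdot \mathrm{radius} \cdot \mathrm{(traj.\ error)})$. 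Choosing $H$ so that $\mathrm{traj.\ error} \le \varepsilon / (\text{that } O(T\beta\,\mathrm{radius})\text{ prefactor})$ gives exactly the claimed bound $J_T(\pi) < J_T(\mathcal{A}) + \varepsilon$, and back-substituting the explicit constants yields the stated formula for $H$ with the constant $1088$ and the power $\kappa^{11}$.

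The main obstacle I expect is bookkeeping the constants precisely enough to land on the exact expression for $H$: one must track how $\kappa$ enters through (i) the filter recursion for $\nu_t$, (ii) the bound $\|\widehat{K}\|\le\kappa$, (iii) the trajectory-perturbation amplification under $\widehat{K}_0$, and (iv) the a priori bound on $\|x_t\|,\|\nu_t\|$ that enters the Lipschitz constant of the quadratic cost — these compound multiplicatively, which is where the $\kappa^{11}$ comes from. The conceptual content (linear policy on $\xi$ unrolls to a geometrically-decaying DAC on $w$, truncate, propagate through stable dynamics and Lipschitz cost) is routine once the explicit form of $\widehat{K}\xi_t$ in terms of $w_{1:t-1}$ is in hand; the care is entirely in the constants and in verifying DAC-admissibility of the truncation.
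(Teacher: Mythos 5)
Your proposal is correct and follows essentially the same route as the paper: the same DAC construction with stabilizing component $\widehat{K}_0$ and weights $(\widehat{K}_1\Sigma^{-1/2}Q^{1/2}-\widehat{K}_0)(A-KQ^{1/2})^{i-1}$, the same admissibility check giving $\theta = 2\kappa^2\max(1,\beta^{1/2})$, and the same truncate-then-propagate-then-Lipschitz cost comparison. The only cosmetic difference is that you bound the state discrepancy by propagating the per-step feedforward truncation error through the common closed loop $A+B\widehat{K}_0$, whereas the paper compares the two disturbance expansions term by term (and bounds the DAC trajectory rather than the competitive one); both yield the same $(1-\gamma/2)^{H}$-type bound.
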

We prove this theorem in Section \ref{sec:structural}. We now show that this result implies best-of-both-worlds.

%karan: optimal /infinite-horizon/ competitive ...

\subsection{Best-of-Both-Worlds in Known Systems}
 We begin by considering the case when the learner knows the linear system $(A,B)$. In this setting, both the regret and competitive ratio thus measure the additional cost imposed by not knowing the perturbations in advance. The result below utilizes the regret bounds against DAC policies and associated algorithms from \cite{agarwal2019online,simchowitzmaking}.

\begin{theorem}[Best-of-both-worlds in online control (known system)]\label{thm:main}
Assuming $(A,B)$ is known to the learner, there exists a constant 
$$ R_T = \tilde{O}\left(\poly(m,n,\beta,\kappa,\gamma^{-1})W^2\times \min\left\{\sqrt{T}, \poly(\mu^{-1})\polylog T\right\}\right) $$
and a computationally efficient online control algorithm $\mathcal{A}$ which simultaneously achieves the following performance guarantees: 
\begin{enumerate}
    \item \label{optimal-competitive-ratio-property} (Optimal competitive ratio) The cost of $\mathcal{A}$ satisfies for any perturbation sequence $w_{1:T}$ that $$J_T(\mathcal{A}|w_{1:T}) < \alpha^* \cdot OPT_* (w_{1:T}) + R_T,$$
    where $\alpha^*$ is the optimal competitive ratio.
    \item \label{low-regret-property} (Low regret) The regret of $\mathcal{A}$ relative to the best linear state-feedback or DAC policy selected in hindsight grows sub-linearly in the horizon $T$, i.e. for all $w_{1:T}$, it holds 
    $$J_T(\mathcal{A}|w_{1:T}) < \min_{\pi \in \K} J_T(\pi|w_{1:T})+ R_T \quad \text{ and }\quad J_T(\mathcal{A}|w_{1:T}) < \min_{\pi \in \mathcal{M}} J_T(\pi|w_{1:T})+ R_T.$$
\end{enumerate}
\end{theorem}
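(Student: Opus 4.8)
The idea is to run a single regret-minimizing controller over exactly the disturbance-action class $\mathcal{M}$ furnished by Theorem~\ref{theorem:main}, and then obtain Property~\ref{optimal-competitive-ratio-property} and Property~\ref{low-regret-property} by composing its regret bound with the structural approximation of the competitive policy. Concretely, take $\mathcal{A}$ to be the Gradient Perturbation Controller of \cite{agarwal2019online} (for the $\sqrt{T}$ regime) or its strongly-convex refinement of \cite{simchowitzmaking} (for the $\poly(\mu^{-1})\polylog T$ regime, which applies since $Q,R\succeq\mu I$), instantiated with stabilizing component $\mathbb{K}=\widehat{K}_0$ and with history length $H$ and radius $\theta$ set exactly as in Theorem~\ref{theorem:main} using the fixed accuracy $\varepsilon=1$. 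Since $H=\Theta(\log T)$ and $\theta=O(\kappa^2\max(1,\beta^{1/2}))$ is independent of $T$, the resulting algorithm is computationally efficient and competes precisely with the class $\mathcal{M}$.

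\textbf{Low regret.} The regret guarantee for GPC recalled above --- which, as noted there, holds against the $(H,\theta,\gamma)$-DAC class and not merely against strongly stable linear policies --- gives, for every admissible $w_{1:T}$,
$$ J_T(\mathcal{A}|w_{1:T}) \le \min_{\pi\in\mathcal{M}} J_T(\pi|w_{1:T}) + R_T', \qquad J_T(\mathcal{A}|w_{1:T}) \le \min_{K\in\K} J_T(K|w_{1:T}) + R_T', $$
with $R_T' = \tilde{O}(\poly(m,n,\beta,\kappa,\gamma^{-1})W^2\min\{\sqrt{T},\poly(\mu^{-1})\polylog T\})$; here enlarging the history to $H=\Theta(\log T)$ inflates the base bound only by $\poly(H)=\polylog T$, absorbed into $\tilde{O}(\cdot)$, and the comparison against $\K$ is the standard truncated-Markov-operator approximation of a $(\kappa,\gamma)$-strongly stable linear policy by a DAC (valid for $\mathcal{M}$ up to adjusting $\theta,\gamma$ by constant factors). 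Setting $R_T := R_T'+1$ yields Property~\ref{low-regret-property}, where the strict inequality comes from the slack $+1$.

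\textbf{Optimal competitive ratio.} Let $\mathcal{A}_{\CR}$ be the optimal competitive controller of Theorem~\ref{crt}, so that $J_T(\mathcal{A}_{\CR}|w_{1:T})\le\alpha^*\cdot OPT_*(w_{1:T})$ (up to an $O(1)$ boundary term that we fold into $R_T$). By Theorem~\ref{theorem:main} with $\varepsilon=1$ there is a policy $\pi^{\dagger}\in\mathcal{M}$ with $J_T(\pi^{\dagger}|w_{1:T}) < J_T(\mathcal{A}_{\CR}|w_{1:T})+1$. Chaining this with the $\mathcal{M}$-regret bound,
$$ J_T(\mathcal{A}|w_{1:T}) \le \min_{\pi\in\mathcal{M}} J_T(\pi|w_{1:T}) + R_T' \le J_T(\pi^{\dagger}|w_{1:T}) + R_T' < J_T(\mathcal{A}_{\CR}|w_{1:T}) + 1 + R_T' \le \alpha^*\cdot OPT_*(w_{1:T}) + R_T, $$
which is Property~\ref{optimal-competitive-ratio-property}.

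\textbf{Expected obstacles.} Given Theorems~\ref{crt} and \ref{theorem:main} and the GPC regret bound, the two chains above are short, so the real work is bookkeeping. The points that need care are: (i) confirming that taking the DAC history to $H=\Theta(\log T)$ costs only $\polylog T$ in the regret, so that $R_T$ retains the advertised rate (and that the strongly-convex refinement of \cite{simchowitzmaking} indeed competes with this enlarged $\mathcal{M}$); (ii) the comparison against $\K$, since $\mathcal{M}$ is built on the competitive gain $\widehat{K}_0$ rather than on $0$, so one must re-instantiate (or adjust constants in) the standard argument that disturbance-action policies with this stabilizer capture every strongly stable linear policy; and (iii) the minor point that $\alpha^*$ is defined as a horizon limit, so one must verify that $\mathcal{A}_{\CR}$ already satisfies $J_T\le\alpha^* OPT_* + O(1)$ at finite $T$ and absorb that $O(1)$ into $R_T$.
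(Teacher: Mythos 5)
Your overall architecture is the same as the paper's: run GPC (or the strongly-convex refinement of \cite{simchowitzmaking}) over the DAC class furnished by \cref{theorem:main} with a fixed accuracy, chain its regret bound against $\K\cup\mathcal{M}$ with the structural approximation of the competitive policy, and fold the constants into $R_T$. The low-regret part and the chaining for Property~\ref{optimal-competitive-ratio-property} are exactly the paper's steps.

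The one genuine hole is the step you label ``obstacle (iii)'' and then dismiss as a minor $O(1)$ boundary term. The guarantee that \cref{crt} actually delivers is an \emph{infinite-horizon} competitive ratio, i.e. it controls $J_\infty(\mathcal{A}_{\CR}|w_{1:\infty})$ against $OPT_*(w_{1:\infty})$, whereas Property~\ref{optimal-competitive-ratio-property} is stated against the \emph{finite-horizon} offline optimum $OPT_*(w_{1:T})$. The inequality $J_T(\mathcal{A}_{\CR}|w_{1:T}) \le \alpha^*\, OPT_*(w_{1:T}) + O_T(1)$ that your chain relies on is not self-evident: after zero-extending the disturbances one gets $J_T \le J_\infty \le \alpha^*\, OPT_*(w_{1:\infty})$ by non-negativity of costs, but $OPT_*(w_{1:\infty})$ can exceed $OPT_*(w_{1:T})$ because the finite-horizon optimal trajectory may terminate at a large state whose stabilization the infinite-horizon optimum must pay for. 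The paper devotes \cref{mainmagic} to closing this gap: it extends the finite-horizon optimal action sequence beyond time $T$ with the strongly stable controller $\widehat{K}_0$, bounds the tail cost by $(2\beta\kappa^4/\gamma^2)\|x_{T+1}\|^2$, and then bounds $\|x_{T+1}\|$ using the explicit characterization of the offline optimal controller (Lemmas C.6 and C.9 of \cite{foster2020logarithmic}), obtaining $OPT_*(w_{1:\infty}) \le OPT_*(w_{1:T}) + \poly(m,n,\kappa,\gamma^{-1},\mu^{-1},\beta)W^2$. Note the resulting correction is constant in $T$ but not a universal constant---it scales with $W^2$ and the system parameters, including $\mu^{-1}$---so ``fold the $O(1)$ into $R_T$'' is only legitimate once this bound is actually proved. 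Your remaining bookkeeping (taking $\varepsilon=1$ so that $H=\Theta(\log T)$, regret against $\K$ via the standard DAC approximation of strongly stable linear policies, and the $\poly(\mu^{-1})\polylog T$ regime via \cite{simchowitzmaking}) matches the paper and is fine.
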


\subsection{Best-of-Both-Worlds in Unknown Systems}
We now present the main results for online control of unknown linear dynamical system. The first theorem deals with the case when the learner has coarse-grained information about the linear system $(A,B)$ in the form of access to a stabilizing controller $\mathbb{K}$. In general, to compute such a stable controller, it is sufficient to known $(A,B)$ to some constant accuracy, as noted in \cite{cohen2019learning}. This theorem utilizes low-regret algorithms from \cite{hazan2019nonstochastic,simchowitzmaking}.

\begin{theorem}[Best-of-both-worlds in online control (unknown system, given a stabilizing controller)]\label{thm:give}
For a $(k,\kappa)$-strongly controllable linear dynamical system $(A,B)$, there exists a constant 
$$ R_T = \tilde{O}\left(\poly(m,n,\beta,k,\kappa,\gamma^{-1})W^2\times \min\left\{T^{2/3}, \poly(\mu^{-1})\sqrt{T}\right\}\right) $$
and a computationally efficient online control algorithm $\mathcal{A}$ such that, when given access to a $(\kappa,\gamma)$-strongly stable initial controller $\mathbb{K}$, it guarantees
    $$ J_T(\mathcal{A}|w_{1:T}) < \min\{\alpha^*\cdot OPT_*(w_{1:T}), \min_{\pi \in \K} J_T(\pi|w_{1:T}), \min_{\pi \in \mathcal{M}} J_T(\pi|w_{1:T})\}+ R_T.$$
\end{theorem}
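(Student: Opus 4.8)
The plan is to combine the structural result of \Cref{theorem:main} with an off-the-shelf low-regret algorithm for online control of an \emph{unknown} linear system that is supplied with a stabilizing controller. Two such algorithms suffice: that of \cite{hazan2019nonstochastic}, which via explore-then-commit system identification attains $\tilde{O}(T^{2/3})$ regret against DAC policies (with the exploration length, and hence $R_T$, depending on the strong-controllability parameter $k$), and that of \cite{simchowitzmaking}, which --- exploiting the strong convexity $Q,R\succeq\mu I$ of the quadratic costs --- attains $\tilde{O}(\poly(\mu^{-1})\sqrt{T})$ regret; running whichever has the smaller bound yields the claimed $R_T$. Concretely I would fix $\varepsilon$ to a small constant, let $\mathcal{M}$ be the $(H,\theta,\gamma)$-DAC class of \Cref{theorem:main} (whose stabilizing component is $\widehat{K}_0$), and instantiate such an algorithm $\mathcal{A}$ with the given $(\kappa,\gamma)$-strongly stable controller $\mathbb{K}$ and internal DAC parameters $H'=\polylog T$, $\theta'=\poly(\kappa,\beta)$ taken large enough that its guarantee reads
$$ J_T(\mathcal{A}|w_{1:T}) \;\le\; \min_{\pi \in \mathcal{M}_{\mathbb{K}}} J_T(\pi|w_{1:T}) + R_T \qquad \forall\, w_{1:T},\ \|w_t\|\le W, $$
where $\mathcal{M}_{\mathbb{K}}$ is the $(H',\theta',\gamma)$-DAC class built on $\mathbb{K}$. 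Since $H'$ is polylogarithmic, $\mathcal{A}$ is computationally efficient.

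The crux is a change-of-stabilizing-controller argument: I would show that with $H',\theta'$ as above, $\mathcal{M}_{\mathbb{K}}$ contains, up to an $O(\varepsilon)$ error in the induced state/control trajectories (hence in cost), a copy of every policy in $\mathcal{M}$ and of every $(\kappa,\gamma)$-strongly stable linear policy in $\mathcal{K}$. This is the standard reduction of \cite{agarwal2019online,hazan2019nonstochastic}: under a DAC with stabilizing component $\widehat{K}_0$ (or under a linear policy $K'$) the state at time $t$ equals, up to a term decaying geometrically in $t$, a fixed linear function of the last $H'$ disturbances, because $A+B\widehat{K}_0$ (resp.\ $A+BK'$) is strongly stable; equating this disturbance-to-state map with the one induced by a DAC built on $\mathbb{K}$ and truncating at horizon $H'=\Theta\!\left(\gamma^{-1}\log(\poly(\kappa,\beta,W,T)/\varepsilon)\right)$ yields replacement DAC gains of norm at most $\theta'(1-\gamma)^i$ and trajectory error at most $\varepsilon$. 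Hence the displayed bound transfers to $\min_{\pi\in\mathcal{M}}$ and to $\min_{\pi\in\mathcal{K}}$, after absorbing the constant approximation error into $R_T$.

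For the competitive-ratio guarantee I would chain the $\mathcal{M}$-regret bound with \Cref{theorem:main} and the definition of competitive ratio: \Cref{theorem:main} furnishes some $\pi\in\mathcal{M}$ with $J_T(\pi|w_{1:T}) < J_T(\mathcal{A}^*|w_{1:T})+\varepsilon$ for the optimal-competitive-ratio controller $\mathcal{A}^*$ of \Cref{crt}, while $J_T(\mathcal{A}^*|w_{1:T}) \le \alpha^*\cdot OPT_*(w_{1:T})$ by definition; so $J_T(\mathcal{A}|w_{1:T}) < \alpha^*\cdot OPT_*(w_{1:T}) + R_T$. Because the three upper bounds (against $\alpha^* OPT_*$, against $\mathcal{K}$, against $\mathcal{M}$) all carry the same additive $R_T$, their minimum equals $\min\{\alpha^* OPT_*,\ \min_{\pi\in\mathcal{K}}J_T(\pi),\ \min_{\pi\in\mathcal{M}}J_T(\pi)\}+R_T$, which is the claim.

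The main obstacle is the change-of-stabilizing-controller step. Unlike the known-system case (\Cref{thm:main}), the learner here cannot form $\widehat{K}_0$ --- it does not know $(A,B)$, and the expression for $\widehat{K}_0$ in \Cref{crt} involves $\alpha^*$ together with several coupled Riccati-type equations --- so the low-regret guarantee must be run against DACs built on the \emph{given} controller $\mathbb{K}$ and only afterwards related back to $\mathcal{M}$. One must also verify that the three error sources --- finite-horizon truncation, system-identification error, and reparametrization error --- are each \emph{additive} and of the stated order; in particular the identification error must not enter multiplicatively against $OPT_*$, which holds because the factor $\alpha^*$ enters solely through \Cref{theorem:main}/\Cref{crt} while every other error is an additive regret term.
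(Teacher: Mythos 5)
Your overall route is the same as the paper's: run the unknown-system low-regret algorithms of \cite{hazan2019nonstochastic} (for the $T^{2/3}$ branch) and \cite{simchowitzmaking} (for the $\poly(\mu^{-1})\sqrt{T}$ branch) against a DAC class built on the \emph{given} stabilizer $\mathbb{K}$, and then relate that class back to the optimal competitive policy. The one structural difference is in how the policy inclusion is factored: you compose \cref{theorem:main} (competitive policy $\approx$ DAC on $\widehat{K}_0$) with a generic change-of-stabilizing-controller lemma ($\mathcal{M}_{\widehat{K}_0}$ and $\mathcal{K}$ are $\varepsilon$-contained in $\mathcal{M}_{\mathbb{K}}$ with slightly inflated $H',\theta'$), whereas the paper proves a single ``generalized policy inclusion'' theorem (its Theorem~\ref{claim:main2}) showing directly that the competitive policy is approximated by a DAC on an \emph{arbitrary} stabilizer $\mathbb{K}$, by adding the explicit correction term $(\widehat{K}_0-\mathbb{K})(\cdots)$ to the DAC gains $M^{[i-1]}$ and re-verifying geometric decay and trajectory closeness. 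Your factorization is the composed form of exactly that computation, so this is a cosmetic rather than substantive difference, and you correctly identify why the known-system argument cannot be reused verbatim (the learner cannot form $\widehat{K}_0$).

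There is, however, one genuine gap: you assert that $J_T(\mathcal{A}^*|w_{1:T}) \le \alpha^* \cdot OPT_*(w_{1:T})$ ``by definition.'' It is not: $\alpha^*$ is the \emph{infinite-horizon} optimal competitive ratio, and the guarantee for the controller of \cref{crt} is $J_\infty(\mathcal{A}^*|w_{1:\infty}) \le \alpha^* \cdot OPT_*(w_{1:\infty})$. To obtain a bound against the finite-horizon offline optimum one must (i) extend $w_{1:T}$ by zeros, use nonnegativity of costs to get $J_T(\mathcal{A}^*|w_{1:T}) \le J_\infty(\mathcal{A}^*|w_{1:\infty}) \le \alpha^*\cdot OPT_*(w_{1:\infty})$, and then (ii) show $OPT_*(w_{1:\infty}) \le OPT_*(w_{1:T}) + \poly(m,n,\kappa,\gamma^{-1},\mu^{-1},\beta)W^2$. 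Step (ii) is not automatic, since the finite-horizon optimal trajectory need not end in a benign state; the paper proves it (Theorem~\ref{mainmagic}) by appending a strongly stable controller after time $T$ and invoking a bound on $\|x_{T+1}\|$ for the offline optimal policy from \cite{foster2020logarithmic}. Without this finite-to-infinite-horizon conversion, your competitive-ratio chain does not close against $OPT_*(w_{1:T})$ as stated in the theorem; with it, the extra $\poly(\cdot)W^2$ term is constant in $T$ and is absorbed into $R_T$, and the rest of your argument goes through.
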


When an initial stabilizing controller is unavailable, we make use of the ``blackbox control'' algorithm in \cite{chen} to establish the next theorem. 

\begin{theorem}[Best-of-both-worlds in online control (unknown system, blackbox control)]\label{thm:black}
For a $(k,\kappa)$-strongly controllable linear dynamical system $(A,B)$,
there exists a constant 
$$ R_T = 2^{\poly(m,n,\beta, k,\kappa,\gamma^{-1})}+ \tilde{O}\left(\poly(m,n,\beta,k,\kappa,\gamma^{-1})W^2\times \min\left\{T^{2/3}, \poly(\mu^{-1})\sqrt{T}\right\}\right) $$
and a computationally efficient online control algorithm $\mathcal{A}$ that guarantees
    $$ J_T(\mathcal{A}|w_{1:T}) < \min\{\alpha^*\cdot OPT_*(w_{1:T}), \min_{\pi \in \K} J_T(\pi|w_{1:T}), \min_{\pi \in \mathcal{M}} J_T(\pi|w_{1:T})\}+ R_T.$$
\end{theorem}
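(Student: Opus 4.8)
The plan is to prove \cref{thm:black} by the same template as \cref{thm:give}: combine the structural result of \cref{theorem:main} with the blackbox control algorithm of \cite{chen}, the only new ingredient being that the (large but horizon-independent) cost of synthesizing an initial stabilizing controller is absorbed into the additive term $R_T$. \textbf{Step 1 (reduce to DAC regret).} Instantiate \cref{theorem:main} with $\varepsilon = 1$. This fixes a history length $H = O(\log T)$ and radius $\theta = O(1)$, with the stated polynomial dependence on the system constants, and the corresponding $(H,\theta,\gamma)$-DAC class $\mathcal{M}$ with stabilizing component $\widehat{K}_0$, such that there is $\pi^\star \in \mathcal{M}$ with $J_T(\pi^\star|w_{1:T}) < J_T(\mathcal{A}_{\mathrm{cr}}|w_{1:T}) + 1 \leq \alpha^*\cdot OPT_*(w_{1:T}) + 1$ for every admissible $w_{1:T}$, where $\mathcal{A}_{\mathrm{cr}}$ is the optimal-competitive-ratio controller of \cref{crt}. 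Hence $\min_{\pi\in\mathcal{M}} J_T(\pi|w_{1:T}) \leq \alpha^*\cdot OPT_*(w_{1:T}) + 1$, and it suffices to produce an efficient algorithm whose cost exceeds $\min_{\pi\in\mathcal{M}} J_T(\pi|w_{1:T})$ and $\min_{\pi\in\K} J_T(\pi|w_{1:T})$ by at most an additive quantity of the claimed shape.

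\textbf{Step 2 (reduce to DAC regret against an arbitrary base controller).} The blackbox algorithm of \cite{chen} does not know $\widehat{K}_0$; during its warm-up it synthesizes some other strongly stable controller $K_0$ from the $(k,\kappa)$-strong controllability assumption, and afterwards runs a GPC-style update whose regret is naturally measured against DAC policies with base $K_0$. I would bridge this with the standard fact that the trajectory $(x_t,u_t)$ generated by an $(H,\theta,\gamma)$-DAC with one strongly stable base controller is, up to an exponentially small per-step error, reproduced by an $(H',\theta',\gamma')$-DAC with any other strongly stable base controller, with $H' = O(H + \log T)$ and $\theta',\gamma'$ polynomial in the stability constants (implicit in \cite{agarwal2019online} and explicit in follow-up work; the analogous reduction from linear state-feedback policies to DACs handles the $\K$ comparator). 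Consequently a regret bound against DACs with base $K_0$ implies, up to an additive $\poly\log T$ correction, the same bound against $\mathcal{M}$ and against $\K$.

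\textbf{Steps 3--4 (invoke the blackbox algorithm, bound the warm-up, and combine).} Run the algorithm of \cite{chen}. Its warm-up phase has length depending only on $m,n,\beta,k,\kappa,\gamma^{-1}$; since the disturbances and the control inputs it applies are bounded and its length is independent of $T$, the total warm-up cost is at most $2^{\poly(m,n,\beta,k,\kappa,\gamma^{-1})}$. After the warm-up the learner holds a stabilizing controller and, over the remaining $T' \leq T$ steps, attains — after the reduction of Step 2 — regret $\tilde{O}\!\left(\poly(m,n,\beta,k,\kappa,\gamma^{-1})\,W^2 \cdot \min\{T^{2/3},\, \poly(\mu^{-1})\sqrt{T}\}\right)$ against both $\mathcal{M}$ and $\K$ on that suffix. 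Decomposing $J_T(\mathcal{A}|w_{1:T})$ into the warm-up cost plus the post-warm-up cost, dropping the warm-up portion of the comparator's cost by nonnegativity of the quadratic costs, and absorbing the $O(\poly\log T)$ transient incurred because the comparator DAC is effectively restarted from the state left by the warm-up rather than from $x_1=0$ (a standard consequence of the stable base controller), we get $J_T(\mathcal{A}|w_{1:T}) < \min_{\pi\in\mathcal{M}} J_T(\pi|w_{1:T}) + R_T'$ and likewise with $\mathcal{M}$ replaced by $\K$, where $R_T'$ is the sum of the warm-up bound, the regret bound, and the transient; this is exactly the claimed form of $R_T$ up to the additive $1$. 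Chaining with Step 1 gives $J_T(\mathcal{A}|w_{1:T}) < \alpha^*\cdot OPT_*(w_{1:T}) + R_T' + 1$, so taking $R_T = R_T' + 1$ yields the stated minimum bound; efficiency follows since all three components (blackbox warm-up, GPC update, base-change reduction) run in polynomial time per step.

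\textbf{Main obstacle.} The delicate point is the splicing in Steps 3--4: one must check that the large, $T$-independent warm-up cost never \emph{multiplies} $OPT_*$ (it should enter $R_T$ purely additively), and that restarting the comparator DAC after the warm-up costs only a lower-order transient — equivalently, that the guarantee of \cite{chen} is a genuine \emph{additive}-regret guarantee against the DAC class on the post-warm-up suffix, against which a DAC ``run from time $1$'' can be compared. The change-of-base-controller reduction of Step 2, though routine, is also load-bearing, since \cref{theorem:main} pins the comparator class to the specific base $\widehat{K}_0$ that the blackbox algorithm has no access to.
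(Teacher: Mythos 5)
Your overall template is the paper's: approximate the optimal competitive policy by a DAC, invoke the blackbox algorithm of \cite{chen} for regret against $\K\cup\mathcal{M}$, and chain the two bounds, with the $T$-independent warm-up cost absorbed additively into $R_T$. However, two steps as you state them do not go through. First, your Step 1 asserts $J_T(\mathcal{A}_{\mathrm{cr}}|w_{1:T}) \leq \alpha^*\cdot OPT_*(w_{1:T})$ (up to the $\varepsilon=1$), but $\alpha^*$ is the \emph{infinite-horizon} competitive ratio of the controller of \cref{crt}: its guarantee is $J_\infty(\mathcal{A}_{\mathrm{cr}}|w_{1:\infty}) \leq \alpha^* \cdot OPT_*(w_{1:\infty})$, not a bound against the finite-horizon offline optimum. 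The paper bridges this by extending $w_{1:T}$ with zeros, using $J_T \leq J_\infty$ and then \cref{mainmagic}, which shows $OPT_*(w_{1:\infty}) \leq OPT_*(w_{1:T}) + \poly(m,n,\kappa,\gamma^{-1},\mu^{-1},\beta)W^2$ by splicing a stabilizing controller onto the finite-horizon offline optimum and bounding $\|x_{T+1}\|$ via an explicit characterization of that optimum. This step is not cosmetic (it is why the paper has Appendix~\ref{sec:e}), and it changes your ``$+1$'' into an additional $T$-independent additive term that must be folded into $R_T$; your proposal omits it entirely.

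Second, your Step 2 delegates the load-bearing base-change step to folklore: \cref{theorem:main} pins the comparator DAC class to the base $\widehat{K}_0$, which the blackbox learner cannot construct, and you bridge with an unproved ``DAC with one stable base is approximated by a DAC with any other stable base'' lemma. The paper does not rely on such a citation; it proves \cref{claim:main2}, a generalized policy-inclusion theorem in which the DAC weights are explicitly modified by a correction term of the form $(\widehat{K}_0-\mathbb{K})\bigl((A+B\widehat{K}_0)^{i-1} + \sum_{j} (A+B\widehat{K}_0)^{i-j-1}B(\widehat{K}_1\Sigma^{-1/2}Q^{1/2}-\widehat{K}_0)(A-KQ^{1/2})^{j-1}\bigr)$, with correspondingly larger $\theta$ and $H$ and a fresh error analysis (the bounds pick up extra factors of $T$ inside the logarithm). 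You correctly identify this as the crux, but as written it is a gap: the quantitative statement you need (uniform closeness of trajectories and costs over the whole horizon, with explicit $(H',\theta',\gamma')$) is exactly what \cref{claim:main2} supplies and what your plan leaves unproved. Relatedly, your Steps 3--4 re-derive the internal warm-up/splicing analysis of \cite{chen}; the paper instead cites its end-to-end regret bound against $\K\cup\mathcal{M}$ directly, so that analysis is unnecessary once the comparator class is taken to be the DAC class with the base controller the blackbox method actually stabilizes with, which is precisely what \cref{claim:main2} licenses.
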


\section{Proof of Theorem \ref{theorem:main}}\label{sec:structural}
\begin{proof}[Proof of Theorem \ref{theorem:main}] 
Unrolling the recursive definition of $\nu_t$ in Theorem~\ref{crt}, we see that
$$ \nu_t = \sum_{i = 0}^{t-1} (A-KQ^{1/2})^{i-1} w_{t-i}.$$ Let $\{x_t\}_{t=1}^T$ be the state sequence generated by the competitive control policy in the original $m$-dimensional system and let $\{\xi_t\}_{t=1}^T$ is the sequence of states generated by the competitive policy in the $2m$-dimensional synthetic system. Additionally, let $\widehat{K} = \begin{bmatrix} \widehat{K}_0 & \widehat{K}_1 \end{bmatrix}$ be the partition of $\widehat{K}$ along the first $m$ columns and the remaining $m$. Recall that  $A + B\widehat{K}_0 = SLS^{-1}$, where $\|S\|\|S^{-1}\| \leq \kappa$ and $\|L\| \leq 1 - \gamma$. Using Theorem~\ref{crt}, we decompose $x_t$ as a linear combination of the disturbance terms:
\begin{align*}
    x_{t+1} &= Ax_t+B\widehat{K}\xi_t +w_t = Ax_t + B\begin{bmatrix} \widehat{K}_0 & \widehat{K}_1 \end{bmatrix}\begin{bmatrix} x_t-\nu_t \\ \Sigma^{-1/2} Q^{1/2}\nu_t \end{bmatrix} + w_t\\
    %&= Ax_t + B\widehat{K}\begin{bmatrix} x_t-\nu_t \\ \Sigma^{-1/2} Q^{1/2}\nu_t \end{bmatrix} + w_t \\
    & = (A+B\widehat{K}_0)x_t + B(\widehat{K_1} \Sigma^{-1/2}Q^{1/2} - \widehat{K}_0) \nu_t  + w_t\\
    % &= \sum_{i = 0}^{t-1} (A+B\widehat{K}_0)^i \left(w_{t-i} + B(\widehat{K}_1\Sigma^{-1/2}Q^{1/2} -\widehat{K}_0) v_{t-i}\right)\\
    % &= \sum_{i= 0}^{t-1} (A+B\widehat{K}_0)^i \left(w_{t-i} +  B(\widehat{K}_1\Sigma^{-1/2}Q^{1/2} -\widehat{K}_0) \sum_{j\geq 1} (A-KQ^{1/2})^{j-1} w_{t-i-j}\right)\\
    &= \sum_{i= 0}^{t-1} \left((A+B\widehat{K}_0)^i  + \sum_{j=1}^{i} (A+B\widehat{K}_0)^{i-j} B(\widehat{K}_1\Sigma^{-1/2}Q^{1/2} -\widehat{K}_0)  (A-KQ^{1/2})^{j-1}  \right) w_{t-i}.
\end{align*}
We now describe a DAC policy which approximates the competitive control policy; recall that every DAC policy is parameterized by a stabilizing controller and a set of $H$ weights. We take the stabilizing controller to be $\widehat{K}_0$ and the set of weights to be $M = (M^{[0]}, \ldots, M^{[H-1]})$, where $M^{[i-1]}$ is
$$M^{[i-1]} \defeq  (\widehat{K}_1\Sigma^{-1/2}Q^{1/2}-\widehat{K}_0)(A-KQ^{1/2})^{i-1}.$$
The action generated by our DAC approximation of the competitive policy is therefore
\begin{align} \label{u-dac-dec}
u_t(M) \defeq \widehat{K}_0x_t(M) + \sum_{i=1}^H M^{[i-1]} w_{t-i},
\end{align}
where $x_t(M)$ is the corresponding state sequence generated by the DAC.

% karan: two indices for A+BK, but whatever ...
We now show that the weights $M^{[i-1]}$'s decay geometrically in time. Recall that  $A + B\widehat{K}_0 = S_1L_1S_1^{-1}$, where $\|S_1\|\|S_1^{-1}\| \leq \kappa$ and $\|L\| \leq 1 - \gamma$. Similarly, $A - KQ^{1/2} = S_2L_2S_2^{-1}$, where $\|S_2\|\|S_2^{-1}\| \leq \kappa$ and $\|L_2\| \leq 1 - \gamma$. Recall that $\Sigma = I + Q^{1/2}PQ^{1/2} \succeq I$, therefore $\|\Sigma^{-1/2}\| \leq 1$. We see that
\begin{align}\label{eq:mbound}
\|M^{[i-1]}\| \leq \|\widehat{K}_1\Sigma^{-1/2}Q^{1/2}-\widehat{K}_0\| \|(A-KQ^{1/2})^{i-1} \| \leq 2\kappa^2 \max(1, \|Q\|^{1/2}) (1-\gamma)^{i-1},
\end{align}
where we used the fact that $\kappa \geq \|\widehat{K}\|$ and hence $\kappa \geq \max(\|\widehat{K}_0\|, \|\widehat{K}_1\|)$. We now bound the distance between $x_t$ and $x_t(M)$ . Plugging our choice of $u_t$ given in (\ref{u-dac-dec}) into the dynamics of $x$, we see that
\begin{align*}
    x_{t+1}(M) &= Ax_{t}(M) + B \left(\widehat{K}_0x_{t}(M) + \sum_{i=1}^H M^{[i-1]} w_{t-i}\right) +  w_{t} \\
    % &= \sum_{i = 0}^{t-1} (A+B\widehat{K}_0)^{i} \left(w_{t-i} + B\sum_{j=1}^H M^{[j-1]} w_{t-i-j}\right)\\
    &= \sum_{i = 0}^{t-1} \left((A+B\widehat{K}_0)^{i} + \sum_{j=1}^{\min\{H, i\}}  (A+B\widehat{K}_0)^{i-j} B M^{[j-1]} \right)w_{t-i}.
\end{align*}

Comparing the expansions of $x_t$ and $x_t(M)$ on a per-term basis, we observe that the terms in their difference contain either $(A+B\widehat{K}_0)^{H/2}$ or $(A-KQ^{1/2})^{H/2}$, and thus their difference is exponentially small in $H/2$ under the strong stability of $\widehat{K}_0$ and $K$. Formally, we have
\begin{claim}
\label{claim:xtxtM}
$\|x_{t+1} - x_{t+1}(M) \| \leq  \frac{16W\kappa^4\max(1, \|Q\|^{1/2})}{\gamma^2} (1 - \gamma/2)^H.$
\end{claim}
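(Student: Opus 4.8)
The plan is to compare the two geometric-series expansions of $x_{t+1}$ and $x_{t+1}(M)$ derived immediately above, cancel the matching terms, and bound the surviving tail. Rewriting the expansion of $x_{t+1}$ using the definition of $M^{[j-1]}$, we have
$$x_{t+1} = \sum_{i=0}^{t-1}\Big((A+B\widehat{K}_0)^i + \sum_{j=1}^i (A+B\widehat{K}_0)^{i-j}BM^{[j-1]}\Big)w_{t-i}, \qquad x_{t+1}(M) = \sum_{i=0}^{t-1}\Big((A+B\widehat{K}_0)^i + \sum_{j=1}^{\min\{H,i\}}(A+B\widehat{K}_0)^{i-j}BM^{[j-1]}\Big)w_{t-i}.$$
Subtracting, the leading terms $(A+B\widehat{K}_0)^i$ cancel, as does every $M$-term with $j\le H$ (for $i\le H$ the two inner sums are identical), leaving
$$x_{t+1}-x_{t+1}(M) = \sum_{i=H+1}^{t-1}\sum_{j=H+1}^{i}(A+B\widehat{K}_0)^{i-j}BM^{[j-1]}w_{t-i}.$$

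Next I would bound a single summand using only facts already established. Strong stability of $\widehat{K}_0$ gives $\|(A+B\widehat{K}_0)^{i-j}\|\le \kappa(1-\gamma)^{i-j}$; the operator-norm bound on $B$ gives $\|B\|\le\kappa$; the estimate (\ref{eq:mbound}) gives $\|M^{[j-1]}\|\le 2\kappa^2\max(1,\|Q\|^{1/2})(1-\gamma)^{j-1}$; and $\|w_{t-i}\|\le W$. Multiplying and using $(i-j)+(j-1)=i-1$ to combine the two geometric factors, each summand has norm at most $2W\kappa^4\max(1,\|Q\|^{1/2})(1-\gamma)^{i-1}$. The inner sum ranges over $j\in\{H+1,\dots,i\}$, i.e.\ at most $i-H$ terms, so
$$\|x_{t+1}-x_{t+1}(M)\| \le 2W\kappa^4\max(1,\|Q\|^{1/2})\sum_{i\ge H+1}(i-H)(1-\gamma)^{i-1}.$$

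Finally I would evaluate the series: factoring out $(1-\gamma)^H$ and re-indexing $k=i-1-H\ge 0$ turns the sum into $(1-\gamma)^H\sum_{k\ge 0}(k+1)(1-\gamma)^k = (1-\gamma)^H/\gamma^2$, which already yields the bound $2W\kappa^4\max(1,\|Q\|^{1/2})(1-\gamma)^H/\gamma^2$; since $(1-\gamma)^H\le(1-\gamma/2)^H$ and $2\le 16$, the claim follows. (Equivalently, one may keep the $H/2$ split alluded to above: bound either $(A+B\widehat{K}_0)^{i-j}$ by its $(1-\gamma)^{H/2}$-th power when $i-j\ge H/2$, or $(A-KQ^{1/2})^{j-1}$ by its $(1-\gamma)^{H/2}$-th power — since $j-1\ge H$ here — absorbing the residual polynomial-in-$i$ multiplicity into the remaining geometric decay; the same computation goes through with the stated constant $16/\gamma^2$.)

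The only real subtlety is bookkeeping. One must check that the cancellation is exact — it is, precisely because the DAC is built from the very weights $M^{[j-1]}$ that appear in the true expansion of $x_{t+1}$, truncated at $H$ — and one must be careful that the linear-in-$i$ multiplicity coming from the inner $j$-sum forces the series to contribute $\gamma^{-2}$ rather than $\gamma^{-1}$. Everything else is a routine chain of submultiplicativity and geometric-sum estimates.
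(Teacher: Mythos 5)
Your proof is correct and follows essentially the same route as the paper's: the same exact cancellation leaving the tail $\sum_{i\ge H+1}\sum_{j=H+1}^{i}(A+B\widehat{K}_0)^{i-j}BM^{[j-1]}w_{t-i}$, and the same per-term bound $2W\kappa^4\max(1,\|Q\|^{1/2})(1-\gamma)^{i-1}$ from strong stability and the bound (\ref{eq:mbound}). The only difference is the final series estimate, where you evaluate $\sum_{i\ge H+1}(i-H)(1-\gamma)^{i-1}=(1-\gamma)^H/\gamma^2$ exactly instead of invoking Lemma~\ref{lem:newlemon} and a geometric tail as the paper does, which even gives a slightly sharper constant than the stated $16/\gamma^2$.
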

We now bound the distance between $u_t$ and $u_t(M)$. Unrolling the dynamics, we get the following, 
\begin{align*}
    u_t &= \widehat{K}\xi_t = \widehat{K}_0 x_t + \sum_{i = 0}^{t-1} (\widehat{K}_1\Sigma^{-1/2}Q^{1/2} -\widehat{K}_0)(A-KQ^{1/2})^{i-1} w_{t-i},
\end{align*}
$$u_t(M) = \widehat{K}_0x_t(M) + \sum_{i=1}^H (\widehat{K}_1\Sigma^{-1/2}Q^{1/2} -\widehat{K}_0 )(A-KQ^{1/2})^{i-1} w_{t-i}.$$
Similar to the argument for Claim \ref{claim:xtxtM} we have the following, 
\begin{claim}\label{claim:deltau}
$\|u_t-u_t(M)\| \leq \frac{20W\kappa^5\max(1, \|Q\|^{1/2})}{\gamma^2} (1 - \gamma/2)^H$.
\end{claim}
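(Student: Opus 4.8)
The plan is to split $u_t - u_t(M)$ into a feedback part and a moving-average part and bound each using tools already in hand, paralleling the proof of Claim~\ref{claim:xtxtM}. Writing $\Delta \defeq \widehat{K}_1\Sigma^{-1/2}Q^{1/2} - \widehat{K}_0$, so that $M^{[i-1]} = \Delta(A-KQ^{1/2})^{i-1}$, I would subtract the two displayed expansions of $u_t$ and $u_t(M)$; the terms with $1 \le i \le H$ cancel, leaving
$$u_t - u_t(M) = \widehat{K}_0\bigl(x_t - x_t(M)\bigr) + \sum_{i=H+1}^{t-1}\Delta(A-KQ^{1/2})^{i-1}w_{t-i}.$$
Before writing this down I would reconcile the stated expansion of $\nu_t$ with the recursion $\nu_{t+1} = (A-KQ^{1/2})\nu_t + w_t$, $\nu_1 = 0$, which yields $\nu_t = \sum_{i=1}^{t-1}(A-KQ^{1/2})^{i-1}w_{t-i}$; this is where one must be careful, since the precise lower index of the surviving tail is what makes the decay come out as $(1-\gamma)^H$ rather than $(1-\gamma)^{H-1}$.

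For the first term, Claim~\ref{claim:xtxtM} holds uniformly over the time index, so $\|x_t - x_t(M)\| \le \frac{16W\kappa^4\max(1,\|Q\|^{1/2})}{\gamma^2}(1-\gamma/2)^H$; together with $\|\widehat{K}_0\| \le \|\widehat{K}\| \le \kappa$ this gives $\|\widehat{K}_0(x_t - x_t(M))\| \le \frac{16W\kappa^5\max(1,\|Q\|^{1/2})}{\gamma^2}(1-\gamma/2)^H$. For the tail term, strong stability of $-K^\top$ with respect to $(A^\top, Q^{1/2})$ gives $A - KQ^{1/2} = S_2L_2S_2^{-1}$ with $\|S_2\|\|S_2^{-1}\| \le \kappa$ and $\|L_2\| \le 1-\gamma$, hence $\|(A-KQ^{1/2})^{i-1}\| \le \kappa(1-\gamma)^{i-1}$; combined with $\|\Delta\| \le \|\widehat{K}_1\|\,\|\Sigma^{-1/2}\|\,\|Q\|^{1/2} + \|\widehat{K}_0\| \le 2\kappa\max(1,\|Q\|^{1/2})$ (using $\|\Sigma^{-1/2}\|\le 1$ and $\|\widehat{K}_0\|,\|\widehat{K}_1\|\le\kappa$, as in (\ref{eq:mbound})) and $\|w_{t-i}\|\le W$, summing the geometric series gives $\bigl\|\sum_{i=H+1}^{t-1}\Delta(A-KQ^{1/2})^{i-1}w_{t-i}\bigr\| \le 2W\kappa^2\max(1,\|Q\|^{1/2})\sum_{i \ge H+1}(1-\gamma)^{i-1} = \frac{2W\kappa^2\max(1,\|Q\|^{1/2})}{\gamma}(1-\gamma)^H$.

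Finally I would add the two estimates and coarsen using $(1-\gamma)^H \le (1-\gamma/2)^H$, $\gamma^{-1}\le\gamma^{-2}$, and $\kappa^2\le\kappa^5$ (valid since $\gamma<1$ and $\kappa\ge 1$), obtaining $\|u_t - u_t(M)\| \le \frac{18W\kappa^5\max(1,\|Q\|^{1/2})}{\gamma^2}(1-\gamma/2)^H$, comfortably within the asserted constant $20$. The computation is routine and reuses exactly the machinery of Claim~\ref{claim:xtxtM}; the only genuine subtlety — and hence the step I would treat most carefully — is the index bookkeeping when forming the difference (the slightly informal $i=0$ terms in the displayed expansions of $\nu_t$ and $u_t$), since an off-by-one in the starting index of the residual sum would change the exponent of the decay factor and the leading constant.
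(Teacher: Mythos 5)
Your proof is correct and follows essentially the same route as the paper: decompose $u_t-u_t(M)$ into $\widehat{K}_0(x_t-x_t(M))$ plus the tail $\sum_{i\geq H+1}(\widehat{K}_1\Sigma^{-1/2}Q^{1/2}-\widehat{K}_0)(A-KQ^{1/2})^{i-1}w_{t-i}$, bound the first piece via Claim~\ref{claim:xtxtM} and $\|\widehat{K}_0\|\leq\kappa$, and the second via strong stability of $K^\top$ and a geometric series, which is exactly what the paper does (your explicit arithmetic giving $18\leq 20$, and your fix of the paper's informal $i=0$ indexing in the expansion of $\nu_t$, are fine).
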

where we used the strong stability of $K^\top$, the bound on $\|x_t - x_t(M)\|$ given in Claim \ref{claim:xtxtM} and the fact that $\kappa \geq \max(1, \|\widehat{K}_0\|, \|\widehat{K}_1\|)$. We now show that by taking $H$ to be sufficiently large we can ensure that $J_T(\pi|w_{1:T}) <  J_T(\mathcal{A}|w_{1:T}) + \varepsilon.$
We utilize the following bound on $x_t(M), u_t(M)$:
\begin{lemma}\label{lem:xudacboundnwq}
The execution of any policy from a $(H,\theta,\gamma')$-DAC policy class produces state-action sequences that obey for all time $t$ that $\|x_t\|,\|u_t\|\leq {3\kappa^3\theta W}/{\gamma\gamma'}$.
\end{lemma}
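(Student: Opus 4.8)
The plan is to unroll the closed-loop dynamics and estimate the result with two independent geometric series: one over the DAC memory and one over time. Fix any policy $M=(M^{[0]},\dots,M^{[H-1]})$ in the $(H,\theta,\gamma')$-DAC class, whose stabilizing component $\mathbb{K}$ is $(\kappa,\gamma)$-strongly stable, say $A+B\mathbb{K}=SLS^{-1}$ with $\|S\|\|S^{-1}\|\le\kappa$ and $\|L\|\le 1-\gamma$. Substituting $u_t=\mathbb{K}x_t+\sum_{i=1}^H M^{[i-1]}w_{t-i}$ into $x_{t+1}=Ax_t+Bu_t+w_t$ gives the recursion $x_{t+1}=(A+B\mathbb{K})x_t+\widetilde{w}_t$, where $\widetilde{w}_t\defeq B\sum_{i=1}^H M^{[i-1]}w_{t-i}+w_t$ depends only on the true disturbances and not on the state; thus, despite $x_t$ appearing inside the action, the recursion driving $x_t$ is genuinely linear and non-circular. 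Unrolling from $x_1=0$ yields $x_{t+1}=\sum_{k=0}^{t-1}(A+B\mathbb{K})^k\widetilde{w}_{t-k}$.

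Next I would bound each ingredient. Strong stability gives $\|(A+B\mathbb{K})^k\|\le\kappa(1-\gamma)^k$. For the effective disturbance, the bounds $\|M^{[i-1]}\|\le\theta(1-\gamma')^{i-1}$, $\|w_s\|\le W$, and $\|B\|\le\kappa$ give $\|\widetilde{w}_s\|\le\kappa\theta W\sum_{i=1}^{\infty}(1-\gamma')^{i-1}+W\le\kappa\theta W/\gamma'+W\le 2\kappa\theta W/\gamma'$, using $\kappa,\theta\ge1$ and $\gamma'\le1$. Summing the geometric series in $k$ then gives $\|x_{t+1}\|\le\kappa\cdot(2\kappa\theta W/\gamma')\cdot\sum_{k\ge0}(1-\gamma)^k=2\kappa^2\theta W/(\gamma\gamma')$. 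For the control, the triangle inequality applied to $u_t=\mathbb{K}x_t+\sum_{i=1}^H M^{[i-1]}w_{t-i}$ together with $\|\mathbb{K}\|\le\kappa$ yields $\|u_t\|\le\kappa\|x_t\|+\theta W/\gamma'\le 2\kappa^3\theta W/(\gamma\gamma')+\theta W/\gamma'$. Finally, coarsening both bounds using $\kappa,\theta\ge1$ and $\gamma,\gamma'\le1$ (so that, e.g., $\theta W/\gamma'\le\kappa^3\theta W/(\gamma\gamma')$) gives $\|x_t\|,\|u_t\|\le 3\kappa^3\theta W/(\gamma\gamma')$, as claimed.

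There is no genuine obstacle here; this is a routine a-priori estimate of the type underpinning all DAC-based analyses. The only points that require a little care are recognizing the non-circular structure of the recursion — so that the bound on $\|x_t\|$ is not stated in terms of a prior bound on $\|x_t\|$ itself — and the bookkeeping of the two distinct decay rates, the memory decay $\gamma'$ and the closed-loop decay $\gamma$, before loosening the constant to the stated $3\kappa^3$.
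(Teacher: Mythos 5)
Your proof is correct and follows essentially the same route as the paper: unroll the closed-loop recursion $x_{t+1}=(A+B\mathbb{K})x_t+\widetilde{w}_t$ from $x_1=0$, bound the effective disturbance via the geometric decay $\|M^{[i]}\|\le\theta(1-\gamma')^i$ and $\|B\|\le\kappa$, sum the geometric series from strong stability, and then bound $u_t$ by the triangle inequality before coarsening to $3\kappa^3\theta W/(\gamma\gamma')$. The intermediate constants ($2\kappa^2\theta W/(\gamma\gamma')$ for the state and $2\kappa^3\theta W/(\gamma\gamma')+\theta W/\gamma'$ for the control) match the paper's exactly.
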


In light of (\ref{eq:mbound}), we see that we can take $\theta = 2\kappa^2\max(1, \|Q\|^{1/2})$, leading to the bound
$$\max(\|x_t\|,\|u_t\|) \leq \frac{6W\kappa^5\max(1, \|Q\|^{1/2})}{\gamma^2}.$$

Using the easily-verified inequality $|\|x\|^2 - \|y\|^2|  \leq \left(\|x - y\| \right) \left(\|2x\| + \|x - y\| \right)$, we put the pieces together to bound the difference in costs incurred by the competitive policy and our DAC approximation in each timestep:
\begin{align*}
|x_t^\top Q x_t - x_t(M)^\top Q x^\top(M) |\leq& \|Q\| \|x_t-x_t(M)\|\left(2\|x_t(M)\|+ \|x_t-x_t(M)\|\right)\\
\leq &  \frac{448W^2\kappa^{10}\max(1, \|Q\|)\|Q\|}{\gamma^4}(1 - \gamma/2)^{H}
\end{align*}
Similarly,
\begin{align*}
|u_t^\top Ru_t - u_t(M)^\top R u_t(M)|\leq& \|R\| \|u_t-u_t(M)\|\left(2\|u_t(M)\|+ \|u_t-u_t(M)\|\right)\\
\leq &  \frac{640W^2\kappa^{11}\max(1, \|Q\|)\|R\|}{\gamma^4}(1 - \gamma/2)^{H}.
\end{align*}
The difference in aggregate cost is 
therefore
$$|J_T(\pi|w_{1:T}) - J_T(\mathcal{A}|w_{1:T})| \leq \frac{1088W^2\kappa^{11}\max(1, \beta^2)}{\gamma^4}(1 - \gamma/2)^{H}T.$$
Taking $ H\geq  \log\left({1088W^2\kappa^{11}\max(1,\beta^2)T}/{\gamma^4 \varepsilon} \right)/{\log(1-\gamma/2)}$ concludes the proof.
\end{proof}

\section{Experiments}

In Figure \ref{fig:dint} we compare the performance of various controllers, namely, the $H_2$ controller, the $H_{\infty}$ controller, the infinite horizon competitive controller from \cite{goel2021competitive}, the GPC controller from \cite{agarwal2019online} and the  ``clairvoyant offline'' controller which selects the optimal-in-hindsight sequence of controls. We do this comparison on a two dimensional double integrator system with different noise sequences. We confirm as the results of our paper suggest that the GPC controller attaining the best of both worlds guarantee is indeed the best performing controller and in particular matches and sometimes improves over the performance of competitive control. Further experiment details along with more simulations on different systems can be found in the appendix. 

\begin{figure}
     \centering
     \begin{subfigure}[b]{0.45\textwidth}
         \centering
         \includegraphics[width=\textwidth]{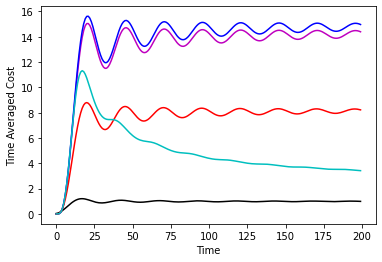}
         \caption{Sinusoidal Perturbations}
         \label{fig:dint_sin}
     \end{subfigure}
     \hfill
     \begin{subfigure}[b]{0.45\textwidth}
         \centering
         \includegraphics[width=\textwidth]{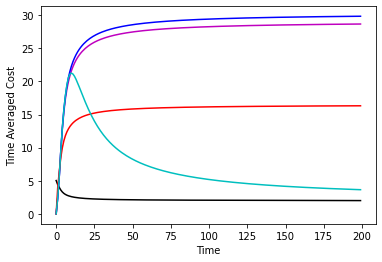}
         \caption{Constant Perturbations}
         \label{fig:dint_const}
     \end{subfigure}
     \bigskip 
     \begin{subfigure}[b]{0.45\textwidth}
         \centering
         \includegraphics[width=\textwidth]{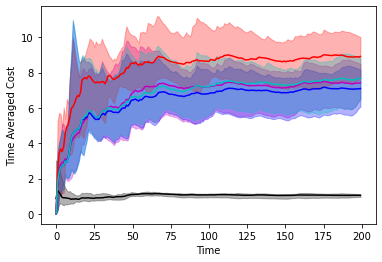}
         \caption{Gaussian Perturbations}
         \label{fig:dint_gaussian}
     \end{subfigure}
     \hfill
     \begin{subfigure}[b]{0.45\textwidth}
         \centering
         \includegraphics[width=\textwidth]{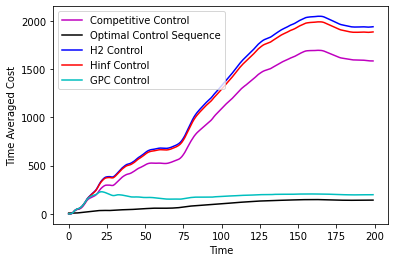}
         \caption{Gaussian Random Walk Perturbations}
         \label{fig:dint_grw}
     \end{subfigure}
     
        \caption{Relative performance of the linear-quadratic controllers  in the double integrator system.}
        \label{fig:dint}
        
\end{figure}

\section{Conclusions, Open Problems and Limitations} \label{sec:conclusions}

We have proved that the optimal competitive policy in an LTI dynamical system is well-approximated by the class of Disturbance Action Control (DAC) policies. This implies that the Gradient Perturbation Control (GPC) algorithm and related approaches are able to attain sublinear regret vs. this policy, even when the dynamical system is unknown ahead of time.  This is the first time that a control method is shown to attain both sublinear regret vs. a large policy class, and simultaneously a competitive ratio vs. the optimal dynamic policy in hindsight (up to a vanishing additive term). 
It remains open to extend our results to time varying and nonlinear systems, the recent methods of \cite{minasyan2021online,gradu2020adaptive} are a potentially good starting point.

\section*{Acknowledgements}

Elad Hazan gratefully acknowledges funding from NSF grant \#1704860.

\bibliography{refs}
\bibliographystyle{alpha}

\newpage

\appendix
 \section{Appendix Layout}
 
 In Section~\ref{sec:a}, we provide the proofs of the claims necessary to conclude Theorem~\ref{theorem:main}. Following this, Section~\ref{sec:b} \& \ref{sec:c} proves the results for known and unknown systems respectively. Thereafter, we prove a generalization of Theorem~\ref{theorem:main} necessary for the case of unknown systems in Section~\ref{sec:d}. Finally, Section~\ref{sec:e} discusses the translation of the optimality results from infinite-horizon to finite-horizon. Section~\ref{sec:exp} provides further details on the experimental setup, along with more experimental evaluations.

\section{Proofs of claims substantiating Theorem~\ref{theorem:main}}\label{sec:a}
\begin{proof}[Proof of Claim~\ref{claim:xtxtM}]
Using the strong stability of $\widehat{K}_0$ and $K^\top$, we have
 \begin{align}\label{eq:xbound}
     &\|x_{t+1} - x_{t+1}(M) \| \\
     \leq& \sum_{i\geq H+1}\sum_{j=H+1}^i   \left\|(A+B\widehat{K}_0)^{i-j}B (\widehat{K}_1\Sigma^{-1/2}Q^{1/2}-\widehat{K}_0)(A-KQ^{1/2})^{j-1} \right\|W \notag \\
     \leq&  2W\kappa^4\max(1, \|Q\|^{1/2})\sum_{i\geq H+1}i(1-\gamma)^{i-1}\notag\\
     \leq &  \frac{8W\kappa^4\max(1, \|Q\|^{1/2})}{\gamma} \sum_{i\geq H+1}  (1-\gamma/2)^i, \notag \\
     \leq & \frac{16W\kappa^4\max(1, \|Q\|^{1/2})}{\gamma^2} (1 - \gamma/2)^H
 \end{align}
 where in the penultimate line we used the following elementary result:
 \begin{lemma}\label{lem:newlemon}
 For all $\gamma \in [0, 1/2]$ and all $i \geq 0$, the following inequality holds: $$ i(1-\gamma)^{i} \leq 2(1-\gamma/2)^i/\gamma.$$
 \end{lemma}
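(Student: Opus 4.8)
The plan is to reduce the claimed inequality to a single-variable statement and then close it using two elementary facts: the bound $1+x \le e^x$ and the cross-multiplied estimate $(1-\gamma/2)^2 \ge 1-\gamma$. First I would dispose of the degenerate cases: for $i = 0$ the left-hand side vanishes and the inequality is trivial, and for $\gamma = 0$ the right-hand side is $+\infty$ (the lemma is only invoked with $\gamma > 0$), so from now on I may assume $i \ge 1$ and $\gamma \in (0, 1/2]$, which in particular keeps $1-\gamma/2 \ge 3/4 > 0$. Dividing both sides of the target inequality by $(1-\gamma/2)^i > 0$, it suffices to prove
$$ i\left(\frac{1-\gamma}{1-\gamma/2}\right)^i \le \frac{2}{\gamma}. $$

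Next I would bound the base of the power. Since $(1-\gamma/2)^2 = 1 - \gamma + \gamma^2/4 \ge 1-\gamma$, dividing through by $1-\gamma/2 > 0$ gives $\frac{1-\gamma}{1-\gamma/2} \le 1-\gamma/2$. Combining this with the standard estimate $1-\gamma/2 \le e^{-\gamma/2}$ yields $\left(\frac{1-\gamma}{1-\gamma/2}\right)^i \le e^{-i\gamma/2}$, so it is enough to establish $i\, e^{-i\gamma/2} \le 2/\gamma$. Substituting $s = i\gamma/2 \ge 0$, the left-hand side equals $\tfrac{2}{\gamma}\, s e^{-s}$, and since $e^s \ge 1 + s \ge s$ we get $s e^{-s} \le 1$, which gives exactly $i\, e^{-i\gamma/2} \le 2/\gamma$ and completes the argument.

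I do not anticipate a genuine obstacle here, as this is a routine elementary inequality; the only step requiring a moment's thought is the choice to bound $\frac{1-\gamma}{1-\gamma/2}$ by $1-\gamma/2$ (via $(1-\gamma/2)^2 \ge 1-\gamma$) before invoking $1+x\le e^x$, rather than trying to estimate $i\left(\frac{1-\gamma}{1-\gamma/2}\right)^i$ directly. Everything after that reduction is a one-line application of $\max_{s\ge 0} s e^{-s} \le 1$.
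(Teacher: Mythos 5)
Your argument is correct and is essentially the paper's own proof in a lightly reorganized form: the paper splits $i(1-\gamma)^i = i(1-\gamma)^{i/2}(1-\gamma)^{i/2}$ and bounds the two halves via $\sqrt{1-\gamma}\le 1-\gamma/2$ (your $(1-\gamma/2)^2\ge 1-\gamma$), $1-\gamma\le e^{-\gamma}$, and $xe^{-x}\le 1$, which are exactly the three ingredients you use after dividing through by $(1-\gamma/2)^i$. Your handling of the degenerate cases $i=0$ and $\gamma=0$ is a harmless addition; no gap here.
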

\end{proof}

\begin{proof}[Proof of Claim~\ref{claim:deltau}] Recall that by unrolling the dynamics, we get the following, 
\begin{align*}
    u_t &= \widehat{K}\xi_t = \widehat{K}_0 (x_t-\nu_t)+\widehat{K}_1\widehat{w}_t= \widehat{K}_0 x_t + \sum_{i = 0}^{t-1} (\widehat{K}_1\Sigma^{-1/2}Q^{1/2} -\widehat{K}_0)(A-KQ^{1/2})^{i-1} w_{t-i},
\end{align*}
$$u_t(M) = \widehat{K}_0x_t(M) + \sum_{i=1}^H (\widehat{K}_1\Sigma^{-1/2}Q^{1/2} -\widehat{K}_0 )(A-KQ^{1/2})^{i-1} w_{t-i}.$$
Using the strong stability of $K^\top$, we bound the point-wise difference between these sequences as 
\begin{align*}
\|u_t-u_t(M)\| &\leq \|\widehat{K}_0\| \|x_t - x_t(M)\| + \sum_{i\geq H+1} \|(\widehat{K}_1\Sigma^{-1/2}Q^{1/2} -\widehat{K}_0 )(A-KQ^{1/2})^{i-1} \|W\\
&\leq \frac{20W\kappa^5\max(1, \|Q\|^{1/2}))}{\gamma^2} (1 - \gamma/2)^H.
\end{align*}
\end{proof}

\begin{proof}[Proof of Lemma~\ref{lem:xudacboundnwq}]
	Since $\mathbb{K}$ is $(\kappa,\gamma)$-strongly stable, there exists matrices $Q,L$ such that $QLQ^{-1}=A+B\mathbb{K}$ with $\|Q^{-1}\|\|Q\|\leq \kappa$ and $\|L\|\leq 1-\gamma$. Also recall that for every member of a $(H,\theta,\gamma')$-DAC class satisfies $\|M^{[i]}\|\leq \theta(1-\gamma')^{i}$. Observe for any $t\geq 0$ that 
	\begin{align*}
		\|x_{t+1}\| &= \left\|\sum_{i\geq 0} (A+B\mathbb{K})^i \left(w_{t-i} + B\sum_{j=1}^H M^{[j-1]}w_{t-i-j}\right)\right\|\\
		&\leq \left(\sum_{i\geq 0} \|(A+B\mathbb{K})^i\|\right) \max_i\left\|w_{t-i} + B\sum_{j=1}^H M^{[j-1]}w_{t-i-j}\right\|\\
		&\leq \left(\sum_{i\geq 0} \|(A+B\mathbb{K})^i\|\right) \max_i\left( \|w_{t-i}\| + \|B\| \max_j\|w_{t-i-j}\|\sum_{j=1}^H \|M^{[j-1]}\|\right)\\
		&\leq \frac{\kappa}{\gamma}W\left(1+\frac{\kappa \theta}{\gamma'}\right) \leq \frac{2\kappa^2 \theta W}{\gamma\gamma'},
	\end{align*}
	\begin{align*}
		\|u_t\| & = \left\|\mathbb{K}x_t + \sum_{i=1}^H M^{[i-1]}w_{t-i}\right\| \leq \frac{2\kappa^3 \theta W}{\gamma\gamma'} +\frac{\theta W}{\gamma'}.
	\end{align*} 
\end{proof}

 \begin{proof}[Proof of Lemma~\ref{lem:newlemon}]
Recall that $xe^{-x} \leq 1$ for all $x$, therefore $ie^{-\gamma i/2} \leq \frac{2}{\gamma}.$ Also $\sqrt{1 - \gamma} \leq 1 - \gamma/2$ for all $\gamma\leq 1$. We have 
\begin{eqnarray*}
i(1-\gamma)^{i} &=& i(1-\gamma)^{i/2}(1-\gamma)^{i/2}\\
&\leq&ie^{-\gamma i/2}(1-\gamma/2)^{i}\\
&\leq&\frac{2}{\gamma}(1-\gamma/2)^{i}.
\end{eqnarray*}
\end{proof}

\section{Proofs for known systems}\label{sec:b}
\begin{proof}[Proof of Theorem~\ref{thm:main}]
The perturbation sequence $w_{1:T}$ has $T$ terms. Overloading this notation, we extend this to an infinite sequence with $w_t=0$ for all $t>T$. Let $\K$ be the class of $(\kappa,\gamma)$-strongly stable linear policies, and $\mathcal{M}$ be a $(H,\theta,\gamma')$-DAC class for $$\theta=2\kappa^2\max\{1,\beta^{1/2}\}, \gamma' = \gamma/2, H=2\log(1088W^2\kappa^{11}\max\{1,\beta^2\}T/\gamma^4)/\gamma.$$
By the results of \cite{agarwal2019online}, using the GPC algorithm (Algorithm 1 in \cite{agarwal2019online}) as the online learner $\mathcal{L}$, we have the following regret guarantee:
$$ J_T(\mathcal{L}|w_{1:T}) \leq \min_{\pi\in \K\cup \mathcal{M}} J_T(\pi|w_{1:T}) + \tilde{O}(\poly(m,n,\kappa,\gamma^{-1},\beta)W^2\sqrt{T})$$
Now, recall that Theorem~\ref{theorem:main} asserts
$$ \min_{\pi\in \mathcal{M}}J_T(\pi| w_{1:T}) \leq J_T(\mathcal{A}|w_{1:T}) + O(1),  $$
where $\mathcal{A}$ is the online learner (from Theorem~\ref{crt}) that achieves an optimal competitive ratio $\alpha^*$.
By  the non-negativity of the cost, and thereafter using the optimal competitive ratio bound, we have
\begin{align*}
  J_T(\mathcal{\mathcal{L}} |w_{1:T}) &\leq  \min_{\pi\in \mathcal{M}} J_T(\pi | w_{1:T}) + \tilde{O}(\poly(m,n,\kappa,\gamma^{-1},\beta)W^2\sqrt{T}) \\
  &\leq J_T(\mathcal{A}|w_{1:T}) + \tilde{O}(\poly(m,n,\kappa,\gamma^{-1},\beta)W^2\sqrt{T})\\
  &\leq J_\infty(\mathcal{A}|w_{1:\infty}) + \tilde{O}(\poly(m,n,\kappa,\gamma^{-1},\beta)W^2\sqrt{T})\\
  &\leq \alpha^*\cdot OPT_*(w_{1:\infty}) + \tilde{O}(\poly(m,n,\kappa,\gamma^{-1},\beta)W^2\sqrt{T})
\end{align*}
An appeal to Theorem~\ref{mainmagic} concludes the $\sqrt{T}$-part of the claim. For the polylog regret result, we use the algorithm and the associated $O(\poly(\mu^{-1}\log T))$ regret guarantee in \cite{simchowitzmaking}.
\end{proof}

\section{Proofs for unknown systems}\label{sec:c}
\begin{proof}[Proof of Theorem~\ref{thm:give}]
The proof of this theorem is similar in structure to that of Theorem~\ref{thm:main}. In particular, we utilize regret-minimizing algorithms from \cite{hazan2019nonstochastic} \& \cite{simchowitzmaking} that apply even $A,B$ are unknown, given a $(\kappa,\gamma)$-strongly stable controller $\mathbb{K}$ for a $(k,\kappa)$-strongly controllable system. However, we highlight a crucial difference below.
 
Theorem~\ref{theorem:main} establishes that the infinite-horizon competitive control policy is contained in the class of DACs that utilize $\widehat{K}_0$ as the stabilizing baseline policy. Since this policy is derived from the competitive controller, which in turn depends on knowledge of $(A,B)$, such a $\widehat{K}_0$ might not be available when the underlying system is unknown, or even approximately known. To alleviate this concern, we give an analogous result that holds with respect to any baseline stabilizing policy $\mathbb{K}$, instead of the specific choice $\widehat{K}_0$.

\begin{theorem}[Generalized policy inclusion]\label{claim:main2}
Fix a horizon $T$. For any $\varepsilon > 0$, choose $$H = 2\log (10^5\beta^2W^2\kappa^{16} T^5 /(\gamma^4\varepsilon))/\gamma, \theta =20\kappa^5\beta^{1/2}/\gamma, \gamma' = \gamma/2$$ and define $\mathcal{M}$ be the set of $(H,\theta,\gamma')$-DAC policies with an arbitrary baseline stabilizing policy $\mathbb{K}$. Let $\mathcal{A}$ be an online algorithm with the optimal competitive ratio $\alpha^*$.
Then there exists a $\pi\in \mathcal{M}$ such that for any perturbation sequence $w_{1:T}$, 
 $$J_T(\pi|w_{1:T}) <  J_T(\mathcal{A}|w_{1:T}) + \varepsilon.$$
\end{theorem}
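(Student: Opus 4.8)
The plan is to recycle the disturbance-feedback expansion of the competitive controller obtained in the proof of Theorem~\ref{theorem:main}, and to show that it equals (the $H$-truncation of) a disturbance-action controller whose baseline is the \emph{given} $\mathbb{K}$ rather than $\widehat{K}_0$. The mechanism is to fold the feedback-mismatch term $(\widehat{K}_0-\mathbb{K})x_t$ into the disturbance-action coefficients; the reason this is possible for \emph{any} such $\mathbb{K}$, without invoking controllability, is that this term enters the dynamics only through $B$, hence lies in the range of the inputs.

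Concretely, I would first restate from the proof of Theorem~\ref{theorem:main} the expansion $x_{t+1}=\sum_{i\ge 0}\Psi_i w_{t-i}$, where $\Psi_0=I$ and, for $i\ge 1$, $\Psi_i=(A+B\widehat{K}_0)\Psi_{i-1}+B\,\Gamma^{[i-1]}$ with $\Gamma^{[i-1]}\defeq(\widehat{K}_1\Sigma^{-1/2}Q^{1/2}-\widehat{K}_0)(A-KQ^{1/2})^{i-1}$, and $u_t=\widehat{K}_0x_t+\sum_{i\ge 1}\Gamma^{[i-1]}w_{t-i}$. The strong stability of $\widehat{K}_0$ and of $-K^\top$ together with Lemma~\ref{lem:newlemon} give $\|\Gamma^{[i-1]}\|\le 2\kappa^2\max(1,\beta^{1/2})(1-\gamma)^{i-1}$ and $\|\Psi_i\|\le 3\kappa^4\max(1,\beta^{1/2})(i+1)(1-\gamma)^{i-1}$ for $i\ge 1$. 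For the given $(\kappa,\gamma)$-strongly stable $\mathbb{K}$, I would then define $N^{[i-1]}\defeq(\widehat{K}_0-\mathbb{K})\Psi_{i-1}+\Gamma^{[i-1]}$ for $i=1,\dots,H$ and let $\pi$ be the DAC with baseline $\mathbb{K}$ and weights $(N^{[0]},\dots,N^{[H-1]})$. Since $\|\widehat{K}_0\|,\|\mathbb{K}\|\le\kappa$ and $\|\Psi_{i-1}\|$ carries an extra factor $i$, one obtains $\|N^{[i-1]}\|\le 8\kappa^5\max(1,\beta^{1/2})\,i\,(1-\gamma)^{i-2}$; absorbing the factor $i$ via Lemma~\ref{lem:newlemon} yields $\|N^{[i-1]}\|\le\theta(1-\gamma/2)^{i-1}$ with $\theta=O(\kappa^5\beta^{1/2}/\gamma)$, so $\pi$ belongs to the claimed $(H,\theta,\gamma/2)$-DAC class.

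The crux is the identity that makes this change of baseline exact: using the recursion for $\Psi_i$,
$$B N^{[i-1]}=B(\widehat{K}_0-\mathbb{K})\Psi_{i-1}+B\,\Gamma^{[i-1]}=\Psi_i-(A+B\mathbb{K})\Psi_{i-1},$$
which unrolls to $\Psi_i=(A+B\mathbb{K})^i+\sum_{j=1}^{i}(A+B\mathbb{K})^{i-j}BN^{[j-1]}$ for \emph{every} $i$. Consequently the disturbance-expansion coefficients of the DAC state $x_t(N)$ coincide with $\Psi_i$ exactly for $i\le H$ and, for $i>H$, differ only by the geometric tail $\sum_{j=H+1}^{i}(A+B\mathbb{K})^{i-j}BN^{[j-1]}$; the same holds for $u_t(N)$ versus $u_t$, after writing $u_t=\mathbb{K}x_t+\sum_{i\ge 1}N^{[i-1]}w_{t-i}$ and using the resulting bound on $\|x_t-x_t(N)\|$. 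Summing these tails over $i>H$ with one more application of Lemma~\ref{lem:newlemon} gives $\|x_t-x_t(N)\|,\|u_t-u_t(N)\|\le\poly(\kappa,\beta,\gamma^{-1})\,W\,(1-\gamma/4)^H$. Combining this with the uniform size bounds on $x_t(N),u_t(N)$ from Lemma~\ref{lem:xudacboundnwq} (and the analogous bounds on $x_t,u_t$) and the elementary inequality $|\,\|a\|^2-\|b\|^2\,|\le\|a-b\|(2\|b\|+\|a-b\|)$ converts the state/action discrepancy into a total cost gap of at most $\poly(\kappa,\beta,\gamma^{-1})\,W^2\,T\,(1-\gamma/4)^H$, and the stated choice of $H$ drives this below $\varepsilon$.

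The step I expect to be the main obstacle is establishing — and correctly tracking the constants in — the identity $BN^{[i-1]}=\Psi_i-(A+B\mathbb{K})\Psi_{i-1}$, i.e. that the mismatch between baselines $\widehat{K}_0$ and $\mathbb{K}$ folds entirely into the disturbance-action part. This succeeds precisely because every term in the competitive dynamics beyond the $(A+B\widehat{K}_0)$ closed loop has the form $B(\cdot)$, so no strong-controllability hypothesis is needed for the inclusion itself; the price is one extra factor of $\kappa$ (from $\|\widehat{K}_0-\mathbb{K}\|\le 2\kappa$) and one extra $\gamma^{-1}$ (from the additional power of $i$ in $\|\Psi_i\|$), which is exactly why $\theta$ and $H$ in Theorem~\ref{claim:main2} must be larger than in Theorem~\ref{theorem:main}.
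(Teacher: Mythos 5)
Your proposal is essentially the paper's own proof: your weights $N^{[i-1]}=(\widehat{K}_0-\mathbb{K})\Psi_{i-1}+\Gamma^{[i-1]}$ coincide with the paper's choice of $M^{[i-1]}$, your identity $BN^{[i-1]}=\Psi_i-(A+B\mathbb{K})\Psi_{i-1}$ is just a cleaner packaging of the paper's double application of $X^n=Y^n+\sum_{i=1}^n Y^{i-1}(X-Y)X^{n-i}$, and the tail comparison of the disturbance-expansion coefficients, the bound $\theta=O(\kappa^5\beta^{1/2}/\gamma)$ with $\gamma'=\gamma/2$, and the cost conversion via Lemma~\ref{lem:xudacboundnwq} all match the paper's argument. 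One minor quantitative caveat: the tail decay you state, $(1-\gamma/4)^H$, is by itself too weak to be driven below $\varepsilon$ by the specific $H=2\log(\cdot)/\gamma$ in the theorem (which is calibrated to a $(1-\gamma/2)^H$ decay); summing the tail over $j$ first reduces it to a clean geometric sum giving $(1-\gamma/2)^H$ (with constants even smaller than the paper's $T$-dependent ones), after which the stated $H$ suffices.
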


Now, with the aid of the above result, we may proceed as with Theorem~\ref{thm:main}. The algorithm and result in \cite{hazan2019nonstochastic} guarantees $\tilde{O}(T^{2/3})$ regret against $\K\cup \mathcal{M}$, and similarly those in \cite{simchowitzmaking} guarantee $\tilde{O}(\poly(\mu^{-1})\sqrt{T})$ regret -- say, this bound is $R_T$ in either case. Again, by  the non-negativity of the cost, and thereafter using the optimal competitive ratio bound, we have
\begin{align*}
  J_T(\mathcal{A} |w_{1:T}) &\leq  \min_{\pi\in \mathcal{M}} J_T(\pi | w_{1:T}) + R_T \\
  &\leq J_T(\mathcal{A}|w_{1:T}) +R_T + O(1)\\
  &\leq J_\infty(\mathcal{A}|w_{1:\infty}) + R_T + O(1)\\
  &\leq \alpha^*\cdot OPT_*(w_{1:\infty}) + R_T + O(1),
\end{align*}
where we extend $w_{1:T}$ to an infinite sequence $w_{1:\infty}$ with $w_t=0$ for all $t>T$.
An appeal to Theorem~\ref{mainmagic} concludes the claim.
\end{proof}

\begin{proof}[Proof of Theorem~\ref{thm:black}]
  Here, we use the algorithm and the regret bound from \cite{chen}, which guarantees $R_T = 2^{O_T(1)} +\tilde{O}_T( \min\{\sqrt{T},\poly(\mu^{1}\log T)\})$ regret again the combined policy class $\K\cup\mathcal{M}$. The rest of the proof is analogous to that of Theorem~\ref{thm:give} -- particularly in that we invoke Theorem~\ref{claim:main2}.
\end{proof}

%\ks{forgot to explain why we need T15 in the above proof}

\section{A generalized proof of policy inclusion}\label{sec:d}
\begin{proof}[Proof of Theorem \ref{claim:main2}] 
We begin by taking note of the explicit characterization of an optimal competitive-ratio algorithm presented in Theorem~\ref{crt}. In particular, unrolling the recursive definition of $\nu_t$, we have
$ \nu_t = \sum_{i\geq 1} (A-KQ^{1/2})^{i-1} w_{t-i}$. Let the {\em true} state sequence visited by such a competitive control policy be $\{x_t\}_{t=1}^T$. Additionally, let $\widehat{K} = \begin{bmatrix} \widehat{K}_0 & \widehat{K}_1 \end{bmatrix}$ be the partition of $\widehat{K}$ along the first $m$ columns and the remaining $m$. Since $\widehat{w}_t = \Sigma^{-1/2}Q^{1/2}\nu_t$, by the second part of the said theorem, we have 
\begin{align*}
    x_{t+1} &= Ax_t+B\widehat{K}\xi_t +w_t = Ax_t + B\widehat{K}\begin{bmatrix} x_t-\nu_t \\ \Sigma^{-1/2} Q^{1/2}\nu_t \end{bmatrix} + w_t \\
    & = (A+B\widehat{K}_0)x_t + B(\widehat{K_1} \Sigma^{-1/2}Q^{1/2} - \widehat{K}_0) \nu_t  + w_t\\
    &= \sum_{i\geq 0} (A+B\widehat{K}_0)^i \left(w_{t-i} + B(\widehat{K}_1\Sigma^{-1/2}Q^{1/2} -\widehat{K}_0) v_{t-i}\right)\\
    &= \sum_{i\geq 0} (A+B\widehat{K}_0)^i \left(w_{t-i} +  B(\widehat{K}_1\Sigma^{-1/2}Q^{1/2} -\widehat{K}_0) \sum_{j\geq 1} (A-KQ^{1/2})^{j-1} w_{t-i-j}\right)\\
    &= \sum_{i\geq 0} \left((A+B\widehat{K}_0)^i  + \sum_{j=1}^{i} (A+B\widehat{K}_0)^{i-j} B(\widehat{K}_1\Sigma^{-1/2}Q^{1/2} -\widehat{K}_0)  (A-KQ^{1/2})^{j-1}  \right) w_{t-i}
\end{align*}
We now choose a DAC policy intended to emulate the competitive control policy as
\begin{align*}
    M^{[i-1]} \defeq&  (\widehat{K}_1\Sigma^{-1/2}Q^{1/2}-\widehat{K}_0)(A-KQ^{1/2})^{i-1} \\
    &+ (\widehat{K}_0-\mathbb{K}) \left((A+B\widehat{K}_0)^{i-1}  + \sum_{j=1}^{i-1} (A+B\widehat{K}_0)^{i-j-1} B(\widehat{K}_1\Sigma^{-1/2}Q^{1/2} -\widehat{K}_0)  (A-KQ^{1/2})^{j-1}  \right)
\end{align*}
and define the associated action as $ u_t(M) \defeq \mathbb{K}x_t(M) + \sum_{i=1}^H M^{[i-1]} w_{t-i}$, where $x_t(M)$ is the state sequence achieved the execution of such a DAC policy.
\begin{align*}
    x_{t+1}(M) &= Ax_{t}(M) + B \left(\mathbb{K}x_{t}(M) + \sum_{i=1}^H M^{[i-1]} w_{t-i}\right) +  w_{t} \\
    &= \sum_{i\geq 0 } (A+B\mathbb{K})^{i} \left(w_{t-i} + B\sum_{j=1}^H M^{[j-1]} w_{t-i-j}\right)\\
    &= \sum_{i\geq 0}  \left((A+B\mathbb{K})^{i} + \sum_{j=1}^{\min\{H, i\}}  (A+B\mathbb{K})^{i-j} B M^{[j-1]} \right)w_{t-i}\\
\end{align*}
Before proceeding to establish that $x_t$ and $x_t(M)$ are close, let us notice that $M^{[i]}$'s decay. To this extent, recall that due to strong stability there exist matrices $\widehat{S},\widehat{L}, S,L$ with $\|\widehat{S}\|\|\widehat{S}^{-1}\|, \|S\|\|S^{-1}\|\leq \kappa$ and $\|L\|\leq 1-\gamma, \|\widehat{L}\|\leq 1-\gamma$ such that $\widehat{A}+\widehat{B}_u\widehat{K} = \widehat{S}\widehat{L}\widehat{S}^{-1}$ and $A-KQ^{1/2}=SLS^{-1}$. Noting that $\Sigma\succeq I$ by definition and that $\beta I\succeq Q,I$, we thus obtain
\begin{align*}
    \|M^{[i]}\| &\leq 2\kappa^2 \beta^{1/2} (1-\gamma)^{i} + 2 \kappa \left(\kappa (1-\gamma)^i + 2\beta^{1/2}\kappa^4 i(1-\gamma)^{i-1} \right)\\
    &\leq 4\kappa^2\beta^{1/2}(1-\gamma)^i + 16 \kappa^5\beta^{1/2} (1-\gamma/2)^i /\gamma\leq 20\kappa^5\beta^{1/2}(1-\gamma/2)^i/\gamma,
\end{align*}
where the last line follows from Lemma~\ref{lem:newlemon}.

Note the identity $X^n = Y^n + \sum_{i=1}^n Y^{i-1} (X-Y) X^{n-i}$. Using this twice, we have
\begin{align*}
    (A+B\widehat{K}_0)^i =& (A+B\mathbb{K})^i + \sum_{j=1}^i (A+B\mathbb{K})^{i-j}B(\widehat{K}_0-\mathbb{K})(A+B\widehat{K}_0)^{j-1},\\
    (A+B\widehat{K}_0)^{i-j} =& (A+B\mathbb{K})^{i-j} + \sum_{k=1}^{i-j} (A+B\mathbb{K})^{k-1} B(\widehat{K}_0-\mathbb{K})(A+B\widehat{K})^{i-j-k}.
\end{align*}
Thus proceeding with the above identities, this time invoking the strong stability of $\mathbb{K}$, we have
\begin{align*}
    \|x_{t} - x_{t}(M) \| \leq& \sum_{i\geq H+1}\sum_{j=H+1}^i   \left\|(A+B\mathbb{K})^{i-j}BM^{[j-1}]\right\|W \\
    \leq & \sum_{i\geq H+1} \frac{20\kappa^7\beta^{1/2}W}{\gamma} i(1-\gamma/2)^{i-1} \leq \frac{20\kappa^7\beta^{1/2}WT^2}{\gamma} (1-\gamma/2)^{H}
\end{align*}
Similarly the corresponding control inputs are nearly equal. By the previous display concerning states, we have
\begin{align*}
    u_t &= \widehat{K}_0 x_t + (\widehat{K}_1\Sigma^{-1/2}Q^{1/2} -\widehat{K}_0) \nu_t\\
    &= \widehat{K}_0 x_t + (\widehat{K}_1\Sigma^{-1/2}Q^{1/2} -\widehat{K}_0) \sum_{i\geq 1} (A-KQ^{1/2})^{i-1} w_{t-i},\\
    u_t(M) &= \mathbb{K}x_t(M) + \sum_{i=1}^H (\widehat{K}_1\Sigma^{-1/2}Q^{1/2} -\widehat{K}_0 )(A-KQ^{1/2})^{i-1} w_{t-i} + (\widehat{K}_0-\mathbb{K})x_t,\\
    \|u_t-u_t(M)\| &\leq \|\mathbb{K}\| \|x_t - x_t(M)\| + \sum_{i\geq H+1} \|(A-KQ^{1/2})\|^{i-1} 2\kappa \beta^{1/2} W \\
    &\leq \frac{20\kappa^8\beta^{1/2}WT^2}{\gamma} (1-\gamma/2)^{H} + 2\kappa^2 \beta^{1/2}W \sum_{i\geq H+1} (1-\gamma)^{i-1} \\
    &\leq \frac{22\kappa^8\beta^{1/2}WT^2}{\gamma} (1-\gamma/2)^{H}.
\end{align*}
 Being the iterates produced by a DAC policy, $x_t(M), u_t(M)$ are bounded as indicated by Lemma~\ref{lem:xudacboundnwq}. Now, in terms of cost, since $\|x\|^2 - \|y\|^2 = (x-y)^\top (x+y)$ and given Equation~\ref{eq:mbound}, we note that 
\begin{align*}
    &|x_t^\top Q x_t + u_t^\top Ru_t - x_t(M)^\top Q x^\top - u_t(M)^\top R u_t(M)|\\
    \leq & \beta\|u_t-u_t(M)\|(2\|u_t(M) \|+\|u_t-u_t(M)\|) + \beta\|x_t-x_t(M)\|(2\|x_t(M)\|+ \|x_t-x_t(M)\|)\\
    \leq & \frac{10^5 \beta^2 W^2 \kappa^{16} T^4}{\gamma^4} (1-\gamma/2)^H
\end{align*}
The difference in aggregate cost is at most $T$ times the quantity above. Therefore, setting $H=2\log (10^5\beta^2W^2\kappa^{16} T^5 /(\gamma^4\varepsilon))/\gamma$ is sufficient to ensure the validity of the claim. We have already established that the DAC satisfies $\theta = 20\kappa^5\beta^{1/2}/\gamma$ and $\gamma'=\gamma/2$.
\end{proof}

\section{To {\em finity} and beyond}\label{sec:e}
In this section, we relate the infinite-horizon cost $OPT_*(w_{1:\infty})$, as used in earlier proofs, to the finite variant $OPT_*(w_{1:T})$. By the non-negativity of the instantaneous cost,  it is implied that $OPT_*(w_{1:\infty}) \geq OPT_*(w_{1:T})$. We prove the reverse inequality, up to constant terms.
\begin{theorem}\label{mainmagic}
  Extend the perturbation sequence $w_{1:T}$ to an infinite sequence $w_{1:\infty}$, by defining $w_t=0$ for $t>T$. Then, we have
  $$ OPT_*(w_{1:\infty}) \leq OPT_*(w_{1:T}) + \poly(m,n,\kappa,\gamma^{-1},\mu^{-1},\beta)W^2.$$
\end{theorem}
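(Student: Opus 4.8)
The plan is to exhibit a single infinite-horizon control sequence whose total cost is at most $OPT_*(w_{1:T})+\poly(m,n,\kappa,\gamma^{-1},\mu^{-1},\beta)W^2$; since $OPT_*(w_{1:\infty})\ge OPT_*(w_{1:T})$ holds trivially from the non-negativity of $c$, this suffices. The sequence is the natural one: let $u^{*}_{1:T}$ be the clairvoyant optimum of the horizon-$T$ problem, with trajectory $x^{*}_{1:T+1}$; play $u^{*}_t$ for $t\le T$, and for $t>T$ play the fixed stabilizing controller $\mathbb{K}=\widehat{K}_0$, i.e.\ $u_t=\mathbb{K}x_t$. The first $T$ steps contribute exactly $OPT_*(w_{1:T})$. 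Since $w_t=0$ for $t>T$ and $\mathbb{K}$ is $(\kappa,\gamma)$-strongly stable, the tail states are $x_{T+1+j}=(A+B\mathbb{K})^j x^{*}_{T+1}$ with $\|x_{T+1+j}\|\le\kappa(1-\gamma)^{j}\|x^{*}_{T+1}\|$, so the tail cost is at most $\beta(1+\kappa^2)\kappa^2\sum_{j\ge0}(1-\gamma)^{2j}\|x^{*}_{T+1}\|^2\le\tfrac{\beta(1+\kappa^2)\kappa^2}{\gamma}\|x^{*}_{T+1}\|^2$. Thus the whole statement reduces to one quantitative fact: the offline-optimal trajectory stays bounded, $\|x^{*}_{T+1}\|^2\le\poly(m,n,\kappa,\gamma^{-1},\mu^{-1},\beta)W^2$.

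To prove that, I would invoke the dynamic-programming description of the clairvoyant optimum. With the known future disturbances, the optimal cost-to-go at time $t$ is $\Phi_t(x)=x^\top P_t x+2q_t^\top x+r_t$, where $P_t$ is the finite-horizon Riccati iterate ($P_{T+1}=0$, $P_t=Q+A^\top P_{t+1}A-A^\top P_{t+1}B(R+B^\top P_{t+1}B)^{-1}B^\top P_{t+1}A$) and $(q_t,r_t)$ solve affine recursions driven by $w_{t:T}$; correspondingly $u^{*}_t=\tilde K_t x^{*}_t+v_t$ with $\tilde K_t=-(R+B^\top P_{t+1}B)^{-1}B^\top P_{t+1}A$ and a feed-forward $v_t=-(R+B^\top P_{t+1}B)^{-1}B^\top q_{t+1}$. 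Three estimates finish the job. First, $\mu I\preceq P_t\preceq pI$ for all $t$, with $p:=\beta(1+\kappa^2)\kappa^2/\gamma$: the lower bound holds because the Riccati recursion adds a positive-semidefinite term to $Q\succeq\mu I$, and the upper bound because $x^\top P_t x$ is the optimal \emph{noiseless} cost-to-go on $[t,T]$ from $x$, which is dominated by that of $\mathbb{K}$-feedback, namely $p\|x\|^2$. Second, the completion-of-squares identity $P_t=Q+\tilde K_t^\top R\tilde K_t+(A+B\tilde K_t)^\top P_{t+1}(A+B\tilde K_t)$ gives $(A+B\tilde K_t)^\top P_{t+1}(A+B\tilde K_t)\preceq P_t-\mu I$, so along the noiseless closed loop the quadratic form $V_t(x):=x^\top P_t x$ contracts, $V_{t+1}\le(1-\mu/p)V_t$; consequently $\bigl\|(A+B\tilde K_{s-1})\cdots(A+B\tilde K_t)\bigr\|\le\sqrt{p/\mu}\,(1-\mu/2p)^{s-t}$, which bounds $\|v_t\|$ (a geometrically weighted sum of the $w_s$ passed through these maps and the bounded matrices $P_{s+1}$), hence $\|d_t\|:=\|Bv_t+w_t\|\le\poly(\kappa,\gamma^{-1},\mu^{-1},\beta)W$. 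Third, feeding $d_t$ back into the contraction, $V_{t+1}(x^{*}_{t+1})\le(1-\mu/2p)V_t(x^{*}_t)+O(p^2/\mu)\|d_t\|^2$, and unrolling from $V_1(x^{*}_1)=V_1(0)=0$ yields $V_t(x^{*}_t)=O(p^3/\mu^2)\max_s\|d_s\|^2$, so $\|x^{*}_t\|^2\le V_t(x^{*}_t)/\mu=\poly\cdot W^2$ for $t\le T$; since at the last step $u_T$ affects the cost only through $u_T^\top R u_T$ (it does not enter $J_T$ via $x_{T+1}$), $u^{*}_T=0$ and $\|x^{*}_{T+1}\|\le\kappa\|x^{*}_T\|+W$, which closes the argument.

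Combining, $OPT_*(w_{1:\infty})\le OPT_*(w_{1:T})+\tfrac{\beta(1+\kappa^2)\kappa^2}{\gamma}\|x^{*}_{T+1}\|^2\le OPT_*(w_{1:T})+\poly(m,n,\kappa,\gamma^{-1},\mu^{-1},\beta)W^2$, proving the theorem. I expect no real conceptual obstacle --- the idea is simply to run the offline optimum for $T$ steps and then hand over to a stabilizing controller. The only delicate part is the second and third estimates above, i.e.\ showing the offline-optimal trajectory is bounded \emph{uniformly in $T$}: one must check that the finite-horizon Riccati matrices do not degenerate, that the closed loops $A+B\tilde K_t$ are uniformly strongly stable with parameters polynomial in the problem constants (the $\mu^{-1}$-dependence entering both through $P_t\succeq\mu I$ and through the contraction rate $\mu/p$), and that the feed-forward term stays $O(W)$. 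This is classical LQR analysis; if one prefers, it can be cited rather than rederived, and in either case it amounts to parameter bookkeeping.
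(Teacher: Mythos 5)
Your proposal is correct and follows essentially the same route as the paper: extend the horizon-$T$ clairvoyant optimum by switching to the strongly stable controller $\widehat{K}_0$ after time $T$, observe that the first $T$ steps contribute exactly $OPT_*(w_{1:T})$, bound the zero-noise tail cost by a geometric series of order $\frac{\beta\kappa^4}{\gamma^2}\|x^*_{T+1}\|^2$, and reduce everything to the bound $\|x^*_{T+1}\|\leq \poly(m,n,\kappa,\gamma^{-1},\mu^{-1},\beta)W$. The only point of divergence is how that last bound is obtained: the paper simply cites Lemmas C.6 and C.9 of \cite{foster2020logarithmic}, whereas you sketch a self-contained finite-horizon Riccati argument (uniform bounds $\mu I\preceq P_t\preceq pI$, contraction of $x^\top P_t x$ along the time-varying closed loop, and an $O(W)$ bound on the feed-forward term), which is a legitimate, more elementary substitute at the cost of the parameter bookkeeping you acknowledge. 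One small slip in that sketch: the feed-forward in the dynamic-programming recursion is $v_t=-(R+B^\top P_{t+1}B)^{-1}B^\top(P_{t+1}w_t+q_{t+1})$ rather than $-(R+B^\top P_{t+1}B)^{-1}B^\top q_{t+1}$ (unless the current disturbance is absorbed into $q_{t+1}$); this changes nothing beyond constants, since both terms are $\poly\cdot W$.
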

\begin{proof}
Let $u_{1:T}^*(w_{1:T}) = \argmin_{u_{1:T}} J_T(u_{1:T}|w_{1:T})$; these are the actions of the finite-horizon optimal offline controller attaining cost $OPT_*(w_{1:T})$. Let $\mathcal{I}$ be an algorithm that executes $u_t^*(w_{1:T})$ for the first $T$ time steps and $u_t =  \widehat{K}_0 x_t$ (or any other $(\kappa,\gamma)$-strongly stable policy) thereafter. Let $x_t(\mathcal{I}), u_t(\mathcal{I})$ be the state-action sequence associated with $\mathcal{I}$. Now, we have by the definition of $OPT_*(w_{1:\infty})$, specifically its optimality, that 
\begin{align*}
 OPT_*(w_{1:\infty}) &\leq   J_\infty (\mathcal{I} | w_{1:\infty}) \\
 &= \sum_{t=1}^{T} c(x_t(\mathcal{I}), u_t(\mathcal{I})) + \sum_{t=T+1}^\infty c(x_t(\mathcal{I}), u_t(\mathcal{I}))\\
 &= OPT_*(w_{1:T}) + \sum_{t=T+1}^\infty c(x_t(\mathcal{I}), u_t(\mathcal{I}))
\end{align*}
where on the last line we use the observation that for the first $T$ steps $\mathcal{I}$ executes $u_t^*(w_{1:T})$. Henceforth, we wish to establish a bound on the remainder term. With this aspiration, let us note that $w_{t}=0$ for all $t>T$ implying for $t>0$ that
$$ x_{T+t}(\mathcal{I}) = (A+B\widehat{K}_0)^{t-1} x_{T+1}, $$ 
$$u_{T+t}(\mathcal{I}) = \widehat{K}_0(A+B\widehat{K}_0)^{t-1} x_{T+1}, $$
$$ \sum_{t=T+1}^\infty c(x_t(\mathcal{I}), u_t(\mathcal{I})) \leq \frac{2\beta \kappa^4 }{\gamma^2} \|x_{T+1}\|^2.$$

To conclude the proof of the claim, we invoke Lemma C.9 and C.6 in \cite{foster2020logarithmic}, whereby the authors establish a bound on the state sequence produced by an optimal offline policy using an explicit characterization of the same, to note that
$$ \|x_{T+1}\| \leq \poly(m,n,\kappa,\gamma^{-1}, \mu^{-1}, \beta)W.$$
\end{proof}

\section{Experiment Setup \& Further Evaluations}\label{sec:exp}
We compare 5 controllers in our experiments, the $H_2$ controller, the $H_{\infty}$ controller, the infinite horizon competitive controller from \cite{goel2021competitive}, the GPC controller from \cite{agarwal2019online} and the  ``clairvoyant offline'' controller which selects the optimal-in-hindsight sequence of controls. We consider two systems a standard 2 dimensional double integrator where the system is given as follows
\[ A = \begin{bmatrix} 1 & 1 \\ 0 & 1 \end{bmatrix}, B_u = \begin{bmatrix} 0 \\ 1 \end{bmatrix}, B_w = I, Q = I, R = I\]
and a system derived from Boeing \cite{varga2010detection}. The exact system details can be found in the code supplied with the supplementary. On a high level the system has a 5-dimensional state and a 9-dimensional control.

The perturbations we add fall into 6 categories.
\begin{itemize}
    \item Sin Perturbations: every entry of $[w_t]_i = \sin(\frac{8 \pi i}{T})$ where $T$ is the time horizon.
    \item Sin Perturbations with changing Amplitude: every entry of $[w_t]_i = \sin(\frac{8 \pi i}{T})\sin(\frac{6 \pi i}{T})$ where $T$ is the time horizon.
    \item Constant Perturbations: every entry of $w_t$ is set to 1.
    \item Uniform Perturbations: every entry of $w_t$ is drawn uniformly and independently in the range $[0,1]$.
    \item Gaussian Perturbations: $w_t$ drawn independently from $\mathcal{N}(0 , I)$.
    \item Gaussian Random Walk Perturbations: Every entry of $w_t$ drawn independently from $\mathcal{N}(w_{t-1} , I)$.
\end{itemize}

The performance of the controllers on the double integrator system can be found in 
Figure~\ref{fig:dint2} and the performance of the controllers on the Boeing system can be found in \ref{fig:more}. The experiments confirm our theorems, that over a large time horizon GPC performs at par and sometimes can be better than the competitive controller. We note that the optimal competitive ratio for the competitve controller turns out to be 14.67 in the double integrator system and 5.686 in the Boeing system.

\begin{figure}
     \centering
     \begin{subfigure}[b]{0.45\textwidth}
         \centering
         \includegraphics[width=\textwidth]{paper_plots/double_integrator/sin.png}
         \caption{Sinusoidal Perturbations}
         \label{fig:dint_sin}
     \end{subfigure}
     \hfill
     \begin{subfigure}[b]{0.45\textwidth}
         \centering
         \includegraphics[width=\textwidth]{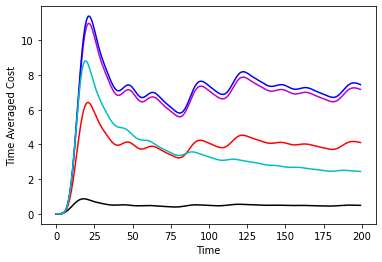}
         \caption{Constant Perturbations}
         \label{fig:dint_const}
     \end{subfigure}
     \bigskip 
     \begin{subfigure}[b]{0.45\textwidth}
         \centering
         \includegraphics[width=\textwidth]{paper_plots/double_integrator/const.png}
         \caption{Sin perturbations with changing amplitude}
         \label{fig:dint_sin}
     \end{subfigure}
     \hfill
     \begin{subfigure}[b]{0.45\textwidth}
         \centering
         \includegraphics[width=\textwidth]{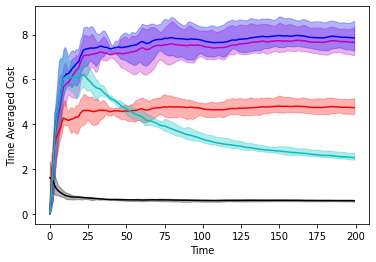}
         \caption{Uniformly distributed perturbations}
         \label{fig:dint_const}
     \end{subfigure}
     \bigskip 
     \begin{subfigure}[b]{0.45\textwidth}
         \centering
         \includegraphics[width=\textwidth]{paper_plots/double_integrator/gaussian.png}
         \caption{Gaussian Perturbations}
         \label{fig:dint_gaussian}
     \end{subfigure}
     \hfill
     \begin{subfigure}[b]{0.45\textwidth}
         \centering
         \includegraphics[width=\textwidth]{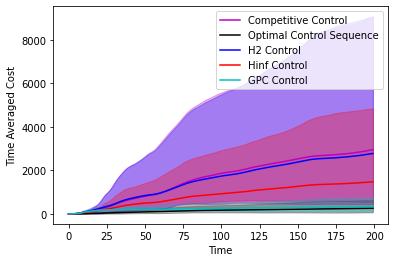}
         \caption{Gaussian Random Walk Perturbations}
         \label{fig:dint_grw}
     \end{subfigure}
     
        \caption{Performance of the various controllers on the Double Integrator system under different noise profiles Error ranges have been indicated over 5 trials for randomly generated noise sequences. In such case the solid lines indicate the mean performance over the 5 trials.}
        \label{fig:dint2}
        
\end{figure}

 \begin{figure}
      \centering
      \begin{subfigure}[b]{0.45\textwidth}
          \centering
          \includegraphics[width=\textwidth]{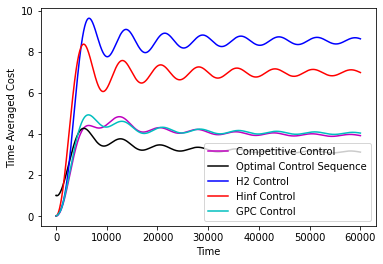}
          \caption{Sin perturbations}
          \label{fig:boeing_e1}
      \end{subfigure}
      \hfill
      \begin{subfigure}[b]{0.45\textwidth}
          \centering
          \includegraphics[width=\textwidth]{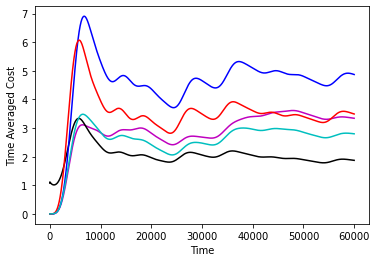}
          \caption{Sin perturbations with changing amplitude}
          \label{fig:boeing_e2}
      \end{subfigure}
      \bigskip 
      \begin{subfigure}[b]{0.45\textwidth}
          \centering
          \includegraphics[width=\textwidth]{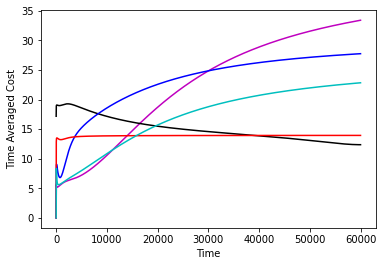}
          \caption{Constant perturbations}
          \label{fig:boeing_e3}
      \end{subfigure}
      \hfill
      \begin{subfigure}[b]{0.45\textwidth}
          \centering
          \includegraphics[width=\textwidth]{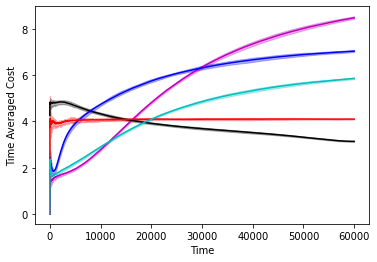}
          \caption{Uniformly distributed perturbations}
          \label{fig:boeing_e4}
      \end{subfigure}
      \bigskip 
      \begin{subfigure}[b]{0.45\textwidth}
          \centering
          \includegraphics[width=\textwidth]{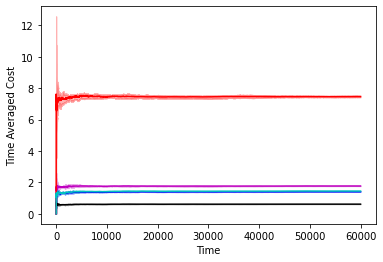}
          \caption{Gaussian perturbations}
          \label{fig:boeing_e5}
      \end{subfigure}
      \hfill
      \begin{subfigure}[b]{0.45\textwidth}
          \centering
          \includegraphics[width=\textwidth]{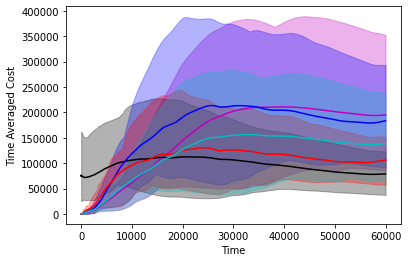}
          \caption{Gaussian Random Walk perturbations}
          \label{fig:boeing_e6}
      \end{subfigure}
      \hfill
         \caption{Performance of the various controllers on the Boeing system under different noise profiles Error ranges have been indicated over 5 trials for randomly generated noise sequences. In such case the solid lines indicate the mean performance over the 5 trials.}\label{fig:more}
 \end{figure}

\paragraph{Hyperparameter setup for GPC}

GPC is only the controller with hyperparameters. We describe the hyperparameters used here. 

\begin{itemize}
    \item Double Integrator System: For all perturbations $H = M = 3$. For all perturbations except the Gaussian Random walk perturbations, GPC was run with learning rate $0.002$. For Gaussian Random walk perturbations, GPC was run with learning rate $1e-4$. 
    \item Boeing System: For all perturbations $H = M = 10$. For all perturbations except the Gaussian Random walk perturbations, GPC was run with learning rate $0.001$. For Gaussian Random walk perturbations, GPC was run with learning rate $1e-7$.
\end{itemize}
We note that the learning rate needs to be reduced significantly for the Gaussian Random walk perturbations as the noise magnitude is growing in time and in this sense our theorem does not immediately apply to this setting.

\end{document}